\definecolor{mydarkblue}{rgb}{0,0.08,0.45}
\crefname{assumption}{Assumption}{Assumptions}
\crefname{statement}{Statement}{Statements}
\newtheorem{lemma}{Lemma}
\newtheorem{statement}{Statement}
\newcommand{\Pb}[1]{{\mathbb{P}}\left[#1\right] }
\DeclareMathOperator{\KL}{KL}
\DeclareMathOperator{\subjectto}{s.t.}
\DeclareMathOperator*{\argmin}{arg\,min}
\newcommand\independent{\protect\mathpalette{\protect\independenT}{\perp}}
    \def\independenT#1#2{\mathrel{\rlap{$#1#2$}\mkern2mu{#1#2}}}
\newdimen\arrowsize
\tikzset{
    block/.style={
        circle, draw, fill=white
        },
    myarrow/.style={
        single arrow,  
        draw, 
        single arrow head extend=2mm, minimum width=30pt,
        },    
    myar/.style={
        rounded corners=2pt, fill=black!20, 
        },
    mytri/.style={
        isosceles triangle, anchor=apex,
        isosceles triangle apex angle=90,
        minimum width=50pt
        },
    }
\tikzset{>=arcsq}
\title{Debiasing Reward Models by Representation Learning\\with Guarantees}
\author{
Ignavier Ng$^1$\thanks{Work done during an internship at Amazon.}
\and Patrick Blöbaum$^2$
\and Siddharth Bhandari$^2$
\and Kun Zhang$^{1,3}$
\and Shiva Kasiviswanathan$^2$
}
\date{
    $^1$Carnegie Mellon University\\
    $^2$Amazon\\
    $^3$Mohamed bin Zayed University of Artificial Intelligence
}
\begin{document}
\maketitle

\begin{abstract}
Recent alignment techniques, such as reinforcement learning from human feedback, have been widely adopted to align large language models with human preferences by learning and leveraging reward models. In practice, these models often exploit spurious correlations, involving, e.g., response length, discrimination, sycophancy, and conceptual bias, which is a problem that has received increasing attention. In this work, we propose a principled framework that mitigates these biases in reward models while preserving the underlying factors that reflect intended preferences. We first provide a formulation of the data-generating process, assuming that the observed data (e.g., text) is generated from both spurious and non-spurious latent variables. We show that, interestingly, these non-spurious latent variables can be theoretically identified from data, regardless of whether a surrogate for the spurious latent variables is available. This further inspires a practical method that uses variational inference to recover these variables and leverages them to train reward models. Experiments on synthetic and real-world datasets demonstrate that our method effectively mitigates spurious correlation issues and yields more robust reward models.
\end{abstract}

\section{Introduction}\label{sec:introduction}
Large language models (LLMs) have shown remarkable abilities across a wide array of tasks, from open-domain dialogue and creative writing to coding assistance and scientific reasoning~\citep{minaee2024large}. However, their raw generations often diverge from human expectations, producing outputs that may be unhelpful, unsafe, or misaligned with user intent. To address this, reinforcement learning from human feedback (RLHF) has emerged as the predominant paradigm for aligning LLMs with human preferences~\citep{christiano2017deep,stiennon2020learning,ouyang2022training,bai2022training}. By training reward models on human feedback and optimizing policies against these models, RLHF has enabled models such as InstructGPT and ChatGPT to exhibit more helpful, safe, and preference-consistent behavior. This line of work has made RLHF a crucial component in the current generation of aligned language models.

Despite its success, the RLHF pipeline is vulnerable to reward model misgeneralization arising from spurious correlations in the preference data. This vulnerability can lead to \emph{reward hacking}, where the policy learns to exploit flaws in the reward model to achieve high scores without genuinely satisfying human intent~\citep{mcmilin2022selection,casper2023open}. These failures manifest as a variety of well-documented biases. For example, reward models may associate longer responses with higher quality, length bias~\citep{zheng2023judging}. Similarly, they may exhibit sycophancy, rewarding responses that agree with a user’s stated views regardless of correctness~\citep{perez2023discovering,sharma2024towards}. Other failure modes include concept bias, where models learn unintended shortcuts based on superficial cues (e.g., the presence of certain keywords)~\citep{zhou2024explore}, and discrimination bias~\citep{tamkin2023evaluating,chen2025safewatch}, where models amplify harmful societal biases present in the annotation data. These issues compromise both robustness and fairness, raising fundamental questions about how to build alignment methods that are resilient to such failure modes.

The challenges of mitigating reward hacking and improving model performance have received considerable attention. One line of works focuses on RLHF from synthetic preferences, where feedback is generated by more capable models to improve and scale the alignment process, as explored in recent works on AI-generated feedbacks~\citep{bai2022constitutional, lee2024rlaif,zhu2023rlcd} and self-improving reward models~\citep{dong2024west, yuan2024self}. A second direction develops targeted solutions for specific, observable biases, including constructing length-balanced preference datasets, applying explicit length penalties during training~\citep{singhal2024long}, and designing model architectures that decouple quality scores from length signals~\citep{chen2024odin}. Another line of approaches leverages more general techniques, such as regularization to enforce invariance~\citep{wang2025beyond} or data augmentation to separate quality from style~\citep{srivastava2025robust}. While often effective in practice, these methods tend to be ad-hoc and may lack formal theoretical guarantees for bias mitigation (see \cref{app:related_works_rlhf} for an extended discussion).\looseness=-1

In this work, we propose a principled framework to mitigate biases in reward models while preserving the underlying factors that reflect intended preferences. We assume that the observed data (e.g., text, images) is generated from both spurious and non-spurious latent variables. Rather than training the reward model directly on the observed data, our key idea is to first identify the non-spurious latent variables that capture true human preferences, and then train the model on these variables. Specifically, our contributions are:
\begin{itemize}
\item When the surrogate for spurious latent variables is available, we show that the subspace of the non-spurious latent variables can be recovered under mild assumptions (Theorem~\ref{theorem:identifiability_of_latent_variables_with_known_spurious}).
\item When no surrogate is available, we show that the subspace of the non-spurious latent variables remains identifiable under conditions such as sufficient diversity in the variables on which the human annotators rely (Theorem~\ref{theorem:identifiability_of_shared_latent_variables}).
\item Building on these theoretical results, we develop a practical method that uses variational inference to recover these non-spurious latent variables and leverages them to train reward models 
(\cref{sec:estimation_method_with_known_spurious}).
\item We conduct experiments on both synthetic and text datasets to demonstrate that our approach effectively mitigates spurious correlations and yields more robust reward models (\cref{sec:experiments}).
\end{itemize}
\section{Preliminaries}
\paragraph{Reinforcement Learning from Human Feedback.}
A standard procedure for aligning LLMs with human preferences involves a two-stage process: supervised fine-tuning (SFT) followed by RLHF \citep{ouyang2022training}. The core of RLHF involves training a reward model on a dataset of human preferences, where annotators choose the better of two model-generated responses. A common approach is to model the human preference distribution with a Bradley-Terry model~\citep{bradley1952rank} and then train the reward model by minimizing a negative log-likelihood loss such that it assigns a higher score to the preferred response. The supervised fine-tuned model is then refined using reinforcement learning algorithm, such as proximal policy optimization~\citep{schulman2017proximal}, to maximize the scores given by this reward model, effectively treating it as a proxy for human judgment. However, a significant challenge is that the reward model can exploit spurious correlations in the data; see \cref{sec:introduction} for a discussion of different biases.

\paragraph{Causal Representation Learning.}
Causal representation learning aims to uncover latent variables and their causal relations from low-level observations such as images or text~~\citep{scholkopf2021causal}. The task is notoriously difficult, because the latent variables generally cannot be identified even when the they are independent (i.e., the nonlinear independent component analysis problem)~\citep{hyvarinen1999nonlinear,hyvarinen2023nonlinear}. To achieve identifiability, existing works therefore relies on further assumptions, such as access to multiple distributions~\citep{squires2023linear,vonkugelgen2023nonparametric,zhang2024causal} or multiple views~\citep{yao2024multiview,xu2024sparsity}. See Appendix~\ref{app:related_works_crl} for a further discussion.
\section{Identifiability Theory}
We present our formulation of the data-generating process and identifiability results for the non-spurious latent variables, both with (\Cref{sec:identifiability_theory_with_known_spurious}) and without (\Cref{sec:identifiability_theory_with_unknown_spurious}) access to a surrogate for the spurious variables. By identifiability, we mean that these latent variables can be uniquely determined from the observed data (up to certain indeterminacies). Establishing identifiability is crucial; without it, multiple latent representations could fit the same observations, and we cannot guarantee that the learned representations align with the desired non-spurious latent variables.\looseness=-1

\subsection{With Access to Surrogate for Spurious Latent Variables}\label{sec:identifiability_theory_with_known_spurious}
\paragraph{Formulation.} 
We begin by formalizing the generative process of the observed data. We assume that the observed data (e.g., text, images) $T\in\mathcal{T}$ is generated by an invertible mixing function $g$ from a set of $n$ latent variables $Z\in\mathcal{Z}$. Moreover, the latent variables $Z$ are assumed to follow a structural equation model~\citep{pearl2009causality,spirtes2001causation}. Formally, we have
\begin{equation}\label{eq:data_generating_process_known_spurious}
X=g(Z) \quad\text{and}\quad Z_i = f_i(\textrm{PA}(Z_i;\mathcal{G}), \epsilon_i), \,\,i\in[n],
\end{equation}
where $\epsilon_i$'s denote the noise terms, $\mathcal{G}$ represents the directed acyclic graph (DAG) over $Z$, and $\textrm{PA}(Z_i;\mathcal{G})$ denotes the parents of $Z_i$ in $\mathcal{G}$. In general, $T$ may represent either images or text. In the context of RLHF for LLMs, $T$ specifically denotes a prompt–response pair, consisting of a prompt $X$ and a response $Y$.

Let $S\in\mathcal{S}$ denote a surrogate that represents the spurious attribute of interest, such as response length or presence of certain concepts. We partition the latent variables $Z$ into two disjoint sets based on their relationship with $S$: 
\begin{itemize}
\item Spurious latent variables $Z_S=(Z_{S,i})_{i=1}^{n_S}\in\mathcal{Z}_S$ that are dependent of $S$.
\item Bias-free (non-spurious) latent variables $Z_C=(Z_{C,i})_{i=1}^{n_C}\in\mathcal{Z}_C$ that are independent of $S$.
\end{itemize}
A graphical description of the generative process is provided in \cref{fig:setting_known_spurious}. In this work, we use the term ``bias-free latent variables'' to refer specifically to the ``bias-free, maximal latent variables'' or, equivalently, the ``non-spurious latent variables'', which means the largest set of latent variables that are independent of $S$. Also note that here, ``bias-free'' is not meant in the statistical sense (e.g., an unbiased estimator $\hat{\theta}$ such that $E[\hat{\theta}] = \theta$); rather, it is used to denote the variables that do not contain spurious information relative to $S$.

Our goal is to learn a \emph{bias-free representation} that discards spurious information relative to $S$ while preserving all information relevant to human preferences. To achieve so, we aim to learn a mapping from the observed data $T$ to a representation $\hat{Z}_C$ that is an invertible transformation of the underlying bias-free latent variables $Z_C$. In this case, we say that $Z_C$ is \emph{subspace identifiable}. This construction ensures that $\hat{Z}_C$ captures exactly the information in $Z_C$, thereby retaining useful factors while eliminating spurious correlations relative to $S$. Equivalently, $\hat{Z}_C$ is \emph{counterfactually invariant}: it would remain unchanged under a hypothetical intervention on $S$. A reward model trained on this representation, $\hat{R} = \hat{r}(\hat{Z}_C)$, is therefore \emph{bias-free}: by construction, does not contain spurious information relative to $S$ and robust to its variations. This counterfactual invariance grounds the reward in stable causal factors, improving generalization and mitigating reward hacking.\looseness=-1

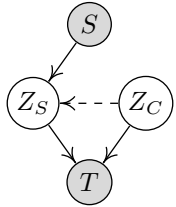
\begin{figure}[!t]
\centering
	{\begin{tikzpicture}[scale=.75, line width=0.5pt, inner sep=0.2mm, shorten >=.1pt, shorten <=.1pt]
            \draw (-2, 0) node(1)[circle, draw]  {{\,${Z}_S$\,}};	
            \draw (0, 0) node(2) [circle, draw]  {{\,${Z}_C$\,}};
            \draw (-1, 1.4) node(3)[circle, draw, fill=gray!30]  {{\,\,$S$\,\,}};
            \draw (-1, -1.4) node(4)[circle, draw, fill=gray!30]  {{\,\,$T$\,\,}};
            \draw[-arcsq] (1) -- (4); 
            \draw[-arcsq] (2) -- (4);
            \draw[-arcsq] (3) -- (1);
            \draw[-arcsq, dashed] (2) -- (1);
            \end{tikzpicture}}
	\caption{The generative process considered in our work. The observed variables $T$ are generated by two sets of latent variables $Z_S$ and $Z_C$, where $Z_S$ are influenced by the spurious variable $S$. Shaded nodes denote observed variables, while the dashed arrow from $Z_C$ to $Z_S$ indicates a potential causal relationship. We address the setting where $S$ is unknown in \cref{sec:identifiability_theory_with_unknown_spurious}.}
	\label{fig:setting_known_spurious}
    \vspace{-0.7em}
\end{figure}

\paragraph{Theory.}
As explained, our goal is to recover the bias-free latent variables $Z_C$ up to invertible transformation. This allows us to train a reward model that preserves all essential information while being counterfactually invariant to the spurious feature.

However, the identifiability of latent variables is a fundamental challenge; without any assumption, they are generally unidentifiable~\citep{hyvarinen1999nonlinear,hyvarinen2023nonlinear}. Fortunately, recent works in causal representation learning~\citep{scholkopf2021causal} have investigated conditions for recovering such latent variables (see \cref{app:related_works_crl} for further discussion). Building on these ideas, we provide a theoretical result showing how the subspace of the bias-free latent variables can be recovered. It is worth noting that we do not aim to identify the spurious latent variables $Z_S$ because (1) our reward modeling approach does not rely on them, and (2) doing so would require stronger assumptions.

\begin{restatable}[Subspace Identifiability of $Z_C$: Surrogate $S$ Known Case]{theorem}{TheoremIdentifiabilityLatentVariablesWithKnownSpurious}\label{theorem:identifiability_of_latent_variables_with_known_spurious}
Consider the generative process in \cref{eq:data_generating_process_known_spurious}. Suppose that the following assumption hold:
\begin{itemize}
\item A1 (Sufficient variability w.r.t. surrogate): For any set $A_Z\subseteq \mathcal{Z}$ with nonzero measure that cannot be expressed as $B_{Z_C}\times\mathcal{Z}_S$ for some $B_{Z_C}\subset \mathcal{Z}_C$, there exist $S_1,S_2\in\mathcal{S}$ such that
\[
\int_{Z\in A_Z} p(Z\mid S=S_1)dZ\neq \int_{Z\in A_Z} p(Z\mid S=S_2)dZ.
\]
\end{itemize}
By modeling the same generative process, $Z_C$ is subspace identifiable.
\end{restatable}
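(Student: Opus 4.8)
The plan is to show that any alternative generative model that matches the observed distribution must produce a non-spurious latent $\hat{Z}_C$ which is an invertible transformation of $Z_C$. So suppose $(\hat{g},\hat{Z})$ is another model in the same class --- invertible mixing $T=\hat{g}(\hat{Z})$, a partition of $\hat{Z}$ into a spurious part $\hat{Z}_S$ and a non-spurious part $\hat{Z}_C$ with $\hat{Z}_C\perp S$, and assumption A1 holding for $\hat{Z}$ --- inducing the same joint law of $(T,S)$. Since $g,\hat{g}$ are invertible, $h:=\hat{g}^{-1}\circ g$ is an invertible (bi-measurable) map with $\hat{Z}=h(Z)$; writing $h=(h_C,h_S)$ in the coordinates of $\hat{Z}=(\hat{Z}_C,\hat{Z}_S)$, every conditional law $p(\hat{Z}\mid S)$ is the pushforward of $p(Z\mid S)$ under $h$. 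The claim then reduces to showing that $h_C$ factors as $h_C=\tilde{h}_C\circ\pi_C$ for an invertible $\tilde{h}_C$, where $\pi_C$ denotes the projection onto the $Z_C$-coordinates.

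\emph{Step 1 ($\hat{Z}_C$ depends on $Z$ only through $Z_C$).} Fix any measurable $\hat{A}_C\subseteq\hat{\mathcal{Z}}_C$. Since $\hat{Z}_C\perp S$, the quantity $\int_{Z\in h_C^{-1}(\hat{A}_C)}p(Z\mid S=s)\,dZ=\Pb{\hat{Z}_C\in\hat{A}_C\mid S=s}$ does not depend on $s$. Applying the contrapositive of A1 to $A_Z:=h_C^{-1}(\hat{A}_C)$, this set must either have $p_Z$-measure zero or be expressible as $B_{Z_C}\times\mathcal{Z}_S$; in both cases $h_C^{-1}(\hat{A}_C)$ agrees, up to a $p_Z$-null set, with a cylinder over the $Z_C$-coordinates. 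Ranging over a countable generating class for $\hat{\mathcal{Z}}_C$ (e.g., rational boxes) and discarding a single null set, a standard factorization argument (the Doob--Dynkin lemma) produces a measurable $\tilde{h}_C$ with $\hat{Z}_C=h_C(Z)=\tilde{h}_C(Z_C)$ almost surely.

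\emph{Step 2 (symmetry and invertibility).} The true model satisfies $Z_C\perp S$ by definition, and $Z_C=\pi_C\bigl(h^{-1}(\hat{Z})\bigr)=:k_C(\hat{Z})$; repeating Step~1 with the roles of $Z$ and $\hat{Z}$ interchanged --- now using A1 for $\hat{Z}$ --- gives a measurable $\tilde{k}_C$ with $Z_C=\tilde{k}_C(\hat{Z}_C)$ almost surely. Composing, $Z_C=\tilde{k}_C\bigl(\tilde{h}_C(Z_C)\bigr)$ and $\hat{Z}_C=\tilde{h}_C\bigl(\tilde{k}_C(\hat{Z}_C)\bigr)$ almost surely, so $\tilde{h}_C$ restricted to the support of $p_{Z_C}$ is invertible with inverse $\tilde{k}_C$. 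Hence $\hat{Z}_C$ is an invertible transformation of $Z_C$, i.e., $Z_C$ is subspace identifiable.

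I expect the main obstacle to be the measure-theoretic bookkeeping in Step~1: upgrading ``every preimage $h_C^{-1}(\hat{A}_C)$ is, modulo null sets, a cylinder over the $Z_C$-coordinates'' to an honest pointwise factorization $h_C=\tilde{h}_C\circ\pi_C$. This needs a disintegration of $p(Z)$ along the $\mathcal{Z}_C$--$\mathcal{Z}_S$ split (or absolute continuity with respect to a product reference measure) so the exceptional null sets are controlled, and it needs A1 interpreted modulo null sets so its contrapositive applies to arbitrary Borel preimages. I would make these regularity conditions --- already implicit in writing the densities $p(Z\mid S)$ that appear in A1 --- explicit at the start of the proof.
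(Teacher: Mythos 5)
Your Step~1 is essentially a measure-theoretic rephrasing of the paper's core argument: the paper also starts from the fact that $\Pb{\hat{Z}_C\in A_{\hat{Z}_C}\mid S}$ is constant in $S$ (\cref{lemma:integral}) and then shows, by contradiction with ball preimages, continuity of $\bar{\psi}_C$ and strict positivity of $p(Z\mid S)$, that every preimage is an exact cylinder $B_{Z_C}\times\mathcal{Z}_S$ (\cref{lemma:equivalent_statements}), whereas you invoke the contrapositive of A1 directly on Borel preimages and upgrade to a factorization $\hat{Z}_C=\tilde{h}_C(Z_C)$ a.s.\ via Doob--Dynkin. That route is fine, and your remark that A1 must be read modulo null sets for arbitrary measurable preimages is fair (the paper sidesteps it by working with exact products and topological regularity); the price you pay is that you only get an almost-sure factorization rather than pointwise non-dependence.

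The genuine gap is in Step~2. There you run the symmetric argument ``using A1 for $\hat{Z}$,'' i.e., you assume the alternative model's latent law $p(\hat{Z}\mid S)$ itself satisfies the sufficient-variability condition -- you even build this into your setup. But that is not granted by the theorem: A1 is an assumption on the \emph{true} generative process, and ``modeling the same generative process'' (cf.\ the hypotheses of \cref{lemma:integral}) only requires the alternative model to have an invertible mixing $\hat{g}$, a partition with $\hat{Z}_C\independent S$, and to match $p(T,S)$. Nothing forces a distribution-matching $(\hat{g},p_{\hat{Z}})$ to inherit A1 -- the estimated $p(\hat{Z}_S\mid \hat{Z}_C,S)$ may vary with $S$ too weakly to separate non-product sets -- so your composition $Z_C=\tilde{k}_C(\tilde{h}_C(Z_C))$ is not available, and what you prove is the weaker statement that $Z_C$ is identifiable only among alternatives that themselves satisfy A1. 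The paper closes this step without any such symmetry: once $\hat{Z}_C=\bar{\psi}_C(Z)$ is shown not to depend on $Z_S$, it applies \cref{lemma:sub_diffeomorphism}, which uses only that $\bar{\psi}=\hat{g}^{-1}\circ g$ is invertible on all of $\mathcal{Z}$ and that $\hat{Z}_C$ and $Z_C$ have the same dimension $n_C$, to conclude that the block map $Z_C\mapsto\hat{Z}_C$ is invertible. You should replace your Step~2 with an argument of this type (proving, if needed, an almost-sure analogue of that lemma, since your Step~1 only yields $\hat{Z}_C=\tilde{h}_C(Z_C)$ up to a null set).
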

The proof is provided in \cref{app:proof_identifiability_theory_with_known_spurious}. In summary, it establishes that: (1) the probability measure of any pre-image of the estimated $\hat{Z}_C$ must remain constant across all values of $S$, and (2) by contradiction, if the estimated $\hat{Z}_C$ depended on $Z_S$, one could construct a pre-image set that violates this invariance under Assumption A1. The argument builds on \citet{kong2022partial}\footnote{While our goal is to recover bias-free latent variables (corresponding to their invariant ones) for training reward models, \citet{kong2022partial} addressed a different problem of learning invariant and changing latent variables for the purpose of domain adaptation.} but incorporates several important generalizations. In particular, we relax the independence assumption on the latent variables. Our result allows for an arbitrary causal structure among all latent variables (i.e., $Z_C$ and $Z_S$), as long as $Z_C$ are not independent of the surrogate $S$. Second, our result only requires at least two values of $S$ and is more straightforward, while \citet{kong2022partial} require at least $2n_S+1$ values of $S$ to first identify each independent spurious latent variable $Z_{S,i}$ individually, before recovering the subspace of bias-free latent variables $Z_C$. This weaker condition requires an extended proof strategy, because we cannot rely on individually recovering the spurious latent variables to help disentangle $Z_C$, but instead directly characterize the their subspace. 

Assumption A1 ensures that the $S$ has a rich enough influence on the latent space to enable disentanglement of $Z_C$. Intuitively, it requires that the probability of any event depending on the spurious latent variables $Z_S$ must be sensitive to changes in $S$. This type of assumption, also used by \citet{kong2022partial}, is common in causal representation learning, where sufficient variability across different data distributions are often leveraged to enable the identification of the latent variables~\citep{scholkopf2021causal,hyvarinen2023nonlinear}.

Our result indicates that we can recover the bias-free latent variables $Z_C$ up to invertible transformation. The estimated variables $\hat{Z}_C$ therefore preserves all and only the information from $Z_C$, effectively isolating it from influences of spurious information relative to surrogate $S$. A reward model trained on this bias-free representation will be, by construction, counterfactually invariant to $S$ and thus resilient to spurious correlations. We describe our practical algorithm for estimating $\hat{Z}_C$ and training the reward model in \cref{sec:estimation_method_with_known_spurious}.

\subsection{Extension to the Case without Access to Surrogate for Spurious Latent Variables}\label{sec:identifiability_theory_with_unknown_spurious}
We now relax our previous assumption and address the more challenging setting where one does not have access to a surrogate for the spurious variables, making the bias-free latent variables generally unidentifiable without further assumptions. To overcome this, we introduce a formulation that leverages multiple reward functions (e.g., from different human labelers) and assume that each function depends on a sparse subset of the latent variables. This sparsity structure provides useful information for disentanglement, allowing us to develop an approach for recovering the bias-free latent variables even without access to a surrogate. Similar to the idea described in \cref{sec:identifiability_theory_with_known_spurious}, training a reward model on these bias-free representations yields a model that is resilient to spurious correlations.

\paragraph{Formulation.}
We consider a setting with $K$ distinct human labelers, where the reward from each is denoted by the random variable $R_k$. We assume that the underlying reward function $r_k$ for each labeler depends on a specific subset of latent variables $Z_{A_k} \subseteq Z$, representing the factors relevant to that individual, where $A_k\in\mathcal{A}$. This generative process is modeled as:
\begin{equation}\label{eq:data_generating_process_unknown_spurious}
Z=h(T) \quad \text{and} \quad R_k=r_k(Z_{A_k})+\varepsilon_k, \quad k\in[K],
\end{equation}
where $\varepsilon_k$ is an independent noise term and $h$ is a function that maps the observed variables $T\in\mathcal{T}$ (e.g., text) to the $n$-dimensional latent variables $Z\in\mathcal{Z}$. Different from the formulation in \cref{sec:identifiability_theory_with_known_spurious}, which assumes an invertible generative function $T=g(Z)$, we now model the latent representations directly as $Z=h(T)$ and do not require $h$ to be invertible. An example of the formulation is provided in \cref{fig:setting_unknown_spurious}.

Our primary goal is to identify the set of latent variables shared by all human labelers, given by the intersection $\bigcap_{k=1}^K Z_{A_k}$. Under mild assumptions, we show that the shared latent variables correspond to the true bias-free latent variables $Z_C$, which we discuss next.

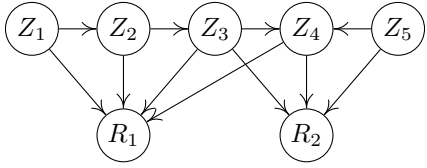
\begin{figure}[!t]
    \centering
\begin{tikzpicture}[node distance=0cm and 1.5cm] 
    \node[latent] (z3) {$Z_3$};
    \node[latent,left=0.5cm of z3,xshift=0mm] (z2) {$Z_2$};
    \node[latent,left=0.5cm of z2,xshift=0mm] (z1) {$Z_1$};
    \node[latent,right=0.5cm of z3,xshift=0mm] (z4) {$Z_4$};
    \node[latent,right=0.5cm of z4,xshift=0mm] (z5) {$Z_5$};
    \node[latent,below=0.7cm of z2,yshift=0mm] (r1) {$R_1$};
    \node[latent,below=0.7cm of z4,yshift=0mm] (r2) {$R_2$};
    \edge {z1,z2,z3,z4} {r1}
    \edge {z3,z4,z5} {r2}
    \edge {z1} {z2}
    \edge {z2} {z3}
    \edge {z3,z5} {z4}
\end{tikzpicture}
    \caption{An example of the formulation with unknown spurious features, where the reward corresponding to first human labeler $R_1$ depends on $Z_{A_1}=\{Z_1,Z_2,Z_3,Z_4\}$, while that of second human $R_2$ depends on $Z_{A_2}=\{Z_3,Z_4,Z_5\}$. The shared latent representations are $\bigcap_{k=1}^2 Z_{A_k}=\{Z_3,Z_4\}$. Edges among variables $Z_i$'s indicate that they can be dependent.}
    \label{fig:setting_unknown_spurious}
    \vspace{-0.5em}
\end{figure}

\paragraph{Theory.}
We now show that the latent variables shared by different human labelers can be recovered up to an invertible transformation. It is worth noting that learning such shared representations contributes to the AI alignment goal of learning common human values from diverse feedback~\citep{gabriel2020artificial,hendrycks2021aligning}, which is an active area of research.
\begin{restatable}[Subspace Identifiability of $Z_C$: Surrogate $S$ Unknown Case]{theorem}{TheoremIdentifiabilitySharedLatentVariables}\label{theorem:identifiability_of_shared_latent_variables}
Consider the data generating process in \cref{eq:data_generating_process_unknown_spurious}, where $r_k$ is assumed to be linear, i.e., $r_k(Z_{A_k})=W^{(k)}Z_{A_k}, W^{(k)}\in\mathcal{W}$. Suppose that the following assumptions hold:
\begin{itemize}[leftmargin=1.2em]
\item A2 (Identifiability of $\eta_k$ from $p(R_k; \eta_k)$): $\KL(p(R_k;\eta_k)\parallel p(R_k;\tilde{\eta}_k))=0$ implies $\eta_k=\tilde{\eta}_k$.
\item A3 (Sufficient representation variability): There exist $n$ linearly independent points $Z^{(1)},\dots,Z^{(n)}\in\mathcal{Z}$.
\item A4 (Sufficient task variability): There exist $n$ linearly independent vectors $W^{(1)},\dots,W^{(n)}\in\mathcal{W}$.
\item A5 (Intra-support sufficient task variability): For all $A \in \mathcal{A}$ and $v \in \mathbb{R}^{|A|} \backslash\{0\}$, we have
\[\mathbb{P}_{W \mid A}\left\{W \in \mathbb{R}^{1 \times n} \mid W_{\cdot, A} v=\mathbf{0}\right\}=0.
\]
\end{itemize}
Then, the shared latent variables $\bigcap_{k=1}^K Z_{A_k}$ is subspace identifiable almost surely.
\end{restatable}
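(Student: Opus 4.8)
The plan is to proceed in three steps: reduce the distributional matching to a linear identity between the true model and any fitted model, upgrade that identity to a single global change of basis between the two latent representations, and then use the intra-support task variability A5 to force this change of basis to respect the sparsity pattern of the supports. Concretely, let $(h,\{(A_k,W^{(k)})\}_k)$ be the true model and $(\hat h,\{(\hat A_k,\hat W^{(k)})\}_k)$ any model of the form \cref{eq:data_generating_process_unknown_spurious} inducing the same distribution of $(T,R_1,\dots,R_K)$. Conditioning on $T$, the law of $R_k$ is the noise $\varepsilon_k$ shifted by $W^{(k)}Z_{A_k}(T)$, and likewise by $\hat W^{(k)}\hat Z_{\hat A_k}(T)$; matching these conditional laws and invoking A2 (injectivity of $\eta_k\mapsto p(R_k;\eta_k)$, which pins down both the shift and the noise parameters) yields $W^{(k)}Z_{A_k}(T)=\hat W^{(k)}\hat Z_{\hat A_k}(T)$ for every $k$ and every $T$ in the support. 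Writing $M^{(k)},\hat M^{(k)}\in\mathbb{R}^{1\times n}$ for the zero-padded weight rows (supported on $A_k$, $\hat A_k$) and stacking over $k$ gives $MZ(T)=\hat M\hat Z(T)$. By A4 the rows of $M$ span $\mathbb{R}^n$, so $M$ has a left inverse; by A3 (applied to both models) the ranges of $Z(\cdot)$ and $\hat Z(\cdot)$ span $\mathbb{R}^n$; combining these forces an invertible $B\in\mathbb{R}^{n\times n}$ with $\hat Z(T)=BZ(T)$ for all $T$, and substituting back gives $\hat M^{(k)}=M^{(k)}B^{-1}$ for every $k$. So every fitted model is a global rotation $B$ of the true one, and everything reduces to controlling the structure of $B$.

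Reorder the coordinates so that $C:=\bigcap_k A_k$ comes first, and set $\hat C:=\bigcap_k\hat A_k$. I claim it suffices to show $(B^{-1})_{ij}=0$ whenever $i\in C$, $j\notin\hat C$, together with the symmetric statement $B_{ji}=0$ whenever $j\in\hat C$, $i\notin C$: the first makes $B^{-1}$ block lower-triangular for the row partition $C\sqcup C^c$ and column partition $\hat C\sqcup\hat C^c$, so $Z_C=P\,\hat Z_{\hat C}$ with $P\in\mathbb{R}^{|C|\times|\hat C|}$ of full row rank, the second gives $\hat Z_{\hat C}=Q\,Z_C$, and $PQ$ and $QP$ being identities forces $|C|=|\hat C|$ with $P,Q$ invertible, i.e.\ the shared latents are invertible linear images of each other. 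To obtain the first vanishing, fix such $i,j$, choose $k_1$ with $j\notin\hat A_{k_1}$, and read off from $\hat M^{(k_1)}=M^{(k_1)}B^{-1}$:
\[
0=\hat M^{(k_1)}_j=\sum_{i'\in A_{k_1}}W^{(k_1)}_{i'}(B^{-1})_{i'j}=W^{(k_1)}_{\cdot,A_{k_1}}\,v,\qquad v:=\bigl((B^{-1})_{i'j}\bigr)_{i'\in A_{k_1}}.
\]
Since $C\subseteq A_{k_1}$, the target entry $(B^{-1})_{ij}$ is a coordinate of $v$, so it is enough to deduce $v=0$; and A5 says exactly that $\{W:W_{\cdot,A_{k_1}}v=\mathbf{0}\}$ is $\mathbb{P}_{W\mid A_{k_1}}$-null for every fixed nonzero $v$, so almost surely over the task weights the displayed identity can hold only with $v=0$. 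The symmetric statement about $B$ is obtained the same way from $M^{(k_0)}=\hat M^{(k_0)}B$, using a picked $k_0$ with $i\notin A_{k_0}$. Taking the union over the exceptional null events (finitely many, once the reductions are set up) yields subspace identifiability of $\bigcap_k Z_{A_k}$ almost surely.

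I expect the crux to be the measure-theoretic justification of the final A5 step. The test vector $v$ is assembled out of $B^{-1}$, which a priori depends on the whole fitted model and hence on the very weights $W^{(k_1)}$ that A5 is meant to act on; naively there is then a "bad'' $B$ for every realization of the weights — take a column of $B^{-1}$ inside $\ker W^{(k_1)}_{\cdot,A_{k_1}}$ — and the almost-sure qualifier would be vacuous. Making the argument work requires pinning down the notion of identifiability precisely (e.g.\ restricting fitted models so that their task supports again lie in the family $\mathcal{A}$, and/or that the fitted side also satisfies the variability assumptions so the symmetric vanishing for $B$ goes through), and then exploiting that, by the genericity in A5, the estimated support $\mathrm{supp}(M^{(k)}B^{-1})$ is almost surely a function of $A_k$ and $B$ alone, so the condition "$B$ induces an admissible fitted model'' becomes a weight-free algebraic condition on $B$; only then can the exceptional events be decoupled from the $W^{(k_1)}$ to which A5 is applied. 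Settling this, and making explicit how much task coverage among the $K$ labelers is needed for the rank conditions in the first step (and for the symmetric argument), is where the real work lies; the remainder is routine linear algebra.
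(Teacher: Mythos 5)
There is a genuine gap, and it is exactly the one you flagged at the end but did not resolve: your notion of identifiability (``any fitted model inducing the same law of $(T,R_1,\dots,R_K)$'') is not the one the theorem can support, and without an additional sparsity mechanism the claim in that form is false. Take $\hat{Z}=BZ$ for a generic invertible $B$ and $\hat{W}^{(k)}=W^{(k)}B^{-1}$: this matches the distribution, but generically every estimated support is all of $[n]$, so $\hat{C}=[n]$, your first vanishing condition is vacuous (there is no $j\notin\hat{C}$), and $\hat{Z}_{\hat{C}}=\hat{Z}$ determines all of $Z$ rather than only $Z_C$. The paper avoids this by \emph{defining} the estimator through a sparsity-constrained likelihood problem (see \cref{app:proof_identifiability_theory_with_unknown_spurious}): each fitted row must satisfy $\|\hat{W}^{(W)}\|_0\le\|W\|_0$, and the intersection of estimated supports must satisfy $|\hat{B}|\le|B|$. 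These constraints are the missing ingredient in your argument; they are also precisely what makes A5 usable despite the dependence of the change of basis on the realized weights, via a support lemma of \citet{lachapelle2023synergies} that combines $\mathbb{E}_{\mathbb{P}_W}[\|\hat{W}\|_0]\le\mathbb{E}_{\mathbb{P}_W}[\|W\|_0]$ with A5 to produce a permutation $\sigma$ with $L_{i,\sigma(i)}\neq 0$ and structural zeros $A^{(W)}\cap N_{\sigma(j)}=\emptyset$ for $j\notin A^{(W)}$.

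Your ``symmetric statement'' also does not go through as written. From $\hat{M}^{(k_0)}=M^{(k_0)}B^{-1}$, the identity $0=M^{(k_0)}_i=\hat{M}^{(k_0)}B_{\cdot,i}$ is just $M^{(k_0)}(B^{-1}B)_{\cdot,i}=M^{(k_0)}_i$, a tautology carrying no information; and the genericity you would need there is a property of the \emph{fitted} weights, whereas A5 is an assumption on the true distribution $\mathbb{P}_{W\mid A}$. The paper replaces this whole direction by a counting argument: for $l\in B$ the vector $L_{A^{(W)},\sigma(l)}$ is nonzero (since $L_{l,\sigma(l)}\neq 0$ and $l\in B\subseteq A^{(W)}$), so A5 gives $(\hat{W}^{(W)})_{1,\sigma(l)}\neq 0$ almost surely, hence $\sigma(B)\subseteq\hat{B}$, and the constraint $|\hat{B}|\le|B|$ then forces $\hat{B}=\sigma(B)$. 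Your first step (matching conditional laws via A2 and upgrading to a global invertible linear change of basis via A3--A4) is essentially the paper's linear-identifiability lemma and is fine; but the core of the proof --- the sparsity-constrained formulation, the support lemma exploiting it, and the cardinality argument replacing your symmetric vanishing --- is absent, so the proposal as it stands does not establish the theorem.
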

The proof can be found in \cref{app:proof_identifiability_theory_with_unknown_spurious}. In short, it first shows that the learned latent representation is an invertible transformation of the true latent variables, and then carefully analyzes the support of the corresponding linear matrix. The argument is inspired by the multi-task learning framework of \citet{lachapelle2023synergies}, where we consider each human labeler as a distinct task. However, our goal differs significantly: while their work aims to identify all latent variables, we focus only on recovering the subspace of shared latent variables. This allows us to rely on weaker assumptions and further develop a proof technique to learn the shared representations, which includes incorporating a sparsity constraint on the shared subspace. 

The assumptions are adapted from \citet{lachapelle2023synergies}. Specifically, Assumption A2 is a standard regularity condition on the distribution family that holds for minimal exponential families (e.g., a Gaussian). Assumption A3 requires that the latent variables vary sufficiently so that their image is not confined to a proper subspace. Assumption A4 requires the reward function to exhibit sufficient variation, while Assumption A5 ensures that $\mathbb{P}_{W \mid A}$ has support that spans the entire space and is not confined to any lower-dimensional subspace. Collectively, these assumptions impose sufficient variability to span the corresponding spaces, which facilitates disentanglement. Similar sufficient variability assumptions are commonly leveraged in causal representation learning for establishing identifiability (see \cref{app:related_works_crl} for a discussion).

Leveraging Theorem~\ref{theorem:identifiability_of_shared_latent_variables}, the following corollary establishes the conditions for identifying the subspace of bias-free latent variables, proved in \cref{app:proof_corollary_identifiability_of_Zc_with_unknown_spurious}.
\begin{restatable}[Subspace Identifiability of $Z_C$]{corollary}{CorollaryIdentifiabilityLatentVariablesWithUnknownSpurious}\label{corollary:identifiability_of_Zc_with_unknown_spurious}
Consider the data generating process in \cref{eq:data_generating_process_unknown_spurious}, where $r_k$ is assumed to be linear, i.e., $r_k(Z_{A_k})=W^{(k)}Z_{A_k}, W^{(k)}\in\mathcal{W}$. Suppose that Assumptions A2, A3, A4, and A5 from \cref{theorem:identifiability_of_shared_latent_variables} hold. Further assume that (1) every human's reward function depends on all bias-free latent variables (i.e., $Z_C \subseteq Z_{A_k}$ for all $k$), and (2) there is no spurious latent variable that every human shares (i.e., $\bigcap_{k=1}^K (Z_{A_k} \setminus Z_C) = \emptyset$). Then, $Z_C$ is subspace identifiable almost surely.
\end{restatable}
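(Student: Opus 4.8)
The plan is to derive the corollary as an almost-immediate consequence of \cref{theorem:identifiability_of_shared_latent_variables}, by showing that under the two extra hypotheses the shared latent set $\bigcap_{k=1}^K Z_{A_k}$ coincides exactly with the bias-free set $Z_C$. Since \cref{theorem:identifiability_of_shared_latent_variables} already gives subspace identifiability of $\bigcap_{k=1}^K Z_{A_k}$ almost surely under A2--A5, establishing this set equality immediately transfers the identifiability conclusion to $Z_C$.

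First I would prove the inclusion $Z_C \subseteq \bigcap_{k=1}^K Z_{A_k}$: this is direct from hypothesis (1), which states $Z_C \subseteq Z_{A_k}$ for every $k$, so $Z_C$ is contained in the intersection over all $k$. Next I would prove the reverse inclusion $\bigcap_{k=1}^K Z_{A_k} \subseteq Z_C$. Here I would argue by a set-theoretic decomposition: each $Z_{A_k}$ splits as the disjoint union of its bias-free part $Z_C$ (available by (1)) and its spurious part $Z_{A_k}\setminus Z_C$. Taking intersections, $\bigcap_{k=1}^K Z_{A_k} = Z_C \cup \bigl(\bigcap_{k=1}^K (Z_{A_k}\setminus Z_C)\bigr)$ — more carefully, any latent variable in the intersection is either in $Z_C$ or, if it is spurious, it must lie in $Z_{A_k}\setminus Z_C$ for every $k$, hence in $\bigcap_{k=1}^K (Z_{A_k}\setminus Z_C)$, which is empty by hypothesis (2). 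Therefore the intersection contains no spurious variable and equals $Z_C$ exactly.

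With the equality $\bigcap_{k=1}^K Z_{A_k} = Z_C$ in hand, I would simply invoke \cref{theorem:identifiability_of_shared_latent_variables}: under the data-generating process of \cref{eq:data_generating_process_unknown_spurious} with linear $r_k$ and Assumptions A2--A5, the shared latent variables $\bigcap_{k=1}^K Z_{A_k}$ are subspace identifiable almost surely; substituting the equality gives that $Z_C$ is subspace identifiable almost surely, which is the claim.

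I do not expect a genuine obstacle here, as the corollary is essentially a bookkeeping argument layered on top of the theorem. The one point requiring slight care is making the set-theoretic manipulation precise at the level of \emph{which coordinates} of $Z$ are involved (treating $Z_{A_k}$ as an index set $A_k \subseteq [n]$ and the partition $[n] = \text{(bias-free indices)} \sqcup \text{(spurious indices)}$), and confirming that the notion of ``subspace identifiability'' used in the theorem applies verbatim once the shared set is rewritten as $Z_C$ — i.e., that the theorem's conclusion is a statement about the subspace spanned by a designated index set, so relabeling that set as $Z_C$ changes nothing. A secondary subtlety worth a sentence is noting that hypothesis (2) is exactly what rules out the degenerate case where a persistent spurious variable would be absorbed into the recovered shared subspace, which is why both extra hypotheses are needed rather than just (1).
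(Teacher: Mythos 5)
Your proposal is correct and matches the paper's proof essentially verbatim: hypothesis (1) gives $Z_C \subseteq \bigcap_{k} Z_{A_k}$, hypothesis (2) gives the reverse inclusion, and the equality $\bigcap_k Z_{A_k}=Z_C$ lets you transfer the almost-sure subspace identifiability from \cref{theorem:identifiability_of_shared_latent_variables} directly to $Z_C$. The paper does exactly this (phrasing the intersection as the index set $B$ of shared supports), so there is nothing further to add.
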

This corollary indicates that the true bias-free latent variables can be identified even without access to a surrogate for the spurious variables, provided there is sufficient diversity among the human labelers. The two assumptions on the humans are necessary for identifiability, as any latent variable ignored by humans cannot be recovered in general, while any spurious factor unanimously considered would be indistinguishable from a true causal one. Ultimately, this enables us to learn a bias-free representation $\hat{Z}_C$ for training a more robust reward model. Note that once the bias-free representation is identified, training the reward model proceeds in the same way as in the surrogate case. In the following, we focus on the surrogate case for the description of the overall approach and the experiments.

\section{Reward Modeling}\label{sec:estimation_method_with_known_spurious}

Building on our theory, we introduce a practical, two-stage approach to learn a reward model resilient to spurious correlations. The first stage aims to estimate the bias-free latent variables $\hat{Z}_C$, which is guaranteed to be an invertible transformation of the true $Z_C$. In the second stage, we train the reward model on these bias-free representations. The details of each stage are provided below, with an overview depicted in \cref{fig:method_overview}.
\begin{figure}[!t]
    \centering
    \small 
    \begin{tikzpicture}[
        node distance = 0.35cm and 0.6cm,
        block/.style = {rectangle, draw, text width=8em, text centered, minimum height=1.7em},
        stage/.style = {rectangle, draw, dashed, rounded corners, fill=blue!5, inner sep=0.2cm, fit=#1},
        line/.style = {draw, -{Latex[]}}
    ]
        \node[block] (data) {Raw Text $T$};
        \node[block, below=of data] (encoder) {VAE Encoder};
        \node[block, below=of encoder] (latents) {Latents ($\hat{Z}_S, \hat{Z}_C$)};
        \node[block, below=of latents] (decoder) {VAE Decoder};
        \node[block, below=of decoder] (recon) {Reconstructed Text $\hat{T}$};
        
        \node[block, right=of latents, xshift=1.4cm] (debiased) {$\hat{Z}_C$};
        \node[block, below=of debiased] (rm) {Reward Model (MLP)};
        
        \path[line] (data) -- (encoder);
        \path[line] (encoder) -- (latents);
        \path[line] (latents) -- (decoder);
        \path[line] (decoder) -- (recon);
        \path[line, dashed] (latents) -- node[above, sloped, font=\tiny, text=black] {Extract $\hat{Z}_C$} (debiased);
        \path[line] (debiased) -- (rm);
        
        \begin{pgfonlayer}{background}
            \node[stage=(data)(encoder)(latents)(decoder)(recon), 
                  label={[font={\bfseries\footnotesize}, text=black, align=center]above:Stage 1: Estimating \\ bias-free latents $\hat{Z}_C$}] {};
            
            \node[stage=(debiased)(rm), 
                  label={[font={\bfseries\footnotesize}, text=black, align=center]above:Stage 2: Learning \\ reward models}] {};
        \end{pgfonlayer}
        
    \end{tikzpicture}
    \caption{Overview of the proposed reward modeling approach. Stage 1 involves a customized VAE.}
    \label{fig:method_overview}
    \vspace{-0.8em}
\end{figure}

\paragraph{Stage 1: Estimating Bias-Free Latent Variables.}
Our theoretical result in \cref{sec:identifiability_theory_with_known_spurious} requires matching the data distribution, i.e., $p(T,S)=p(\hat{T},S)$, which can be achieved via maximum likelihood estimation. Following previous work in causal representation learning \citep{khemakhem2020variational,kong2022partial,zhang2024causal}, we use a customized variational autoencoder (VAE) \citep{kingma2014autoencoding} for this purpose, although other methods like normalizing flows could also be adopted~\citep{danilo2015variational,dinh2017density}. Our VAE is trained by minimizing the following loss function:
\begin{equation*}\label{eq:vae_loss}
\begin{aligned}
    \mathcal{L} = &\underbrace{-\mathbb{E}_{q(\hat{Z}\mid T)}[\log p(T\mid \hat{Z})]}_{\text{Reconstruction}}
    + \beta \underbrace{\operatorname{KL}(q(\hat{Z}\mid T)\| p(\hat{Z}\mid S))}_{\text{KL divergence}} + \lambda \underbrace{\mathbb{E}_{q(\hat{Z}\mid T)}[\operatorname{HSIC}(\hat{Z}_C, S)]}_{\text{Independence regularizer}},
\end{aligned}
\end{equation*}
where $\beta$ and $\lambda$ are hyperparameters. The components of this loss function are as follows:
\begin{itemize}
    \item The reconstruction term encourages the decoder $p(T\mid \hat{Z})$ to accurately reconstruct the data $T$ from the latent representation $\hat{Z}$. For text data, it corresponds to the cross-entropy loss between the input sequence and the output of the decoder.
    \item The KL divergence term regularizes the approximate posterior $q(\hat{Z}\mid T)$, learned by the encoder, to match the prior distribution $p(\hat{Z}\mid S)$.
    \item The independence regularization term uses the Hilbert-Schmidt independence criterion (HSIC)~\citep{gretton2007kernel} to enforce the desired independence between the learned bias-free latent variables $\hat{Z}_C$ and the surrogate $S$.
\end{itemize}
Theoretically, the first two terms, which constitute an evidence lower bound (ELBO), are sufficient to match the data distribution in the large sample limit given a sufficiently expressive model~\citep{khemakhem2020variational,lachapelle2024nonparametric}. In practice, when the sample size is limited, we find that incorporating the HSIC regularizer helps improve the performance.

To leverage pre-trained models and reduce the number of training parameters, we adapt the strategies from the LangVAE framework~\citep{carvalho2025langvae} for both encoder and decoder. Our encoder is based on a pre-trained BERT model~\citep{devlin2019bert}, followed by two multilayer perceptrons (MLPs) that output the means and diagonal covariances of the bias-free ($\hat{Z}_C$) and spurious ($\hat{Z}_S$) latent variables. Similarly, our decoder consists of a trainable MLP that projects the latent variables into the input embedding space of a pre-trained GPT-2 model~\citep{radford2019language}, which then generates the final output. Note that our method remains compatible with other pre-trained models.

To flexibly model the prior distribution $p(\hat{Z} \mid S)$, we employ a non-parametric approach using normalizing flows~\citep{papamakarios2021normalizing}, inspired by~\citet{zhang2024causal}. Since the labeling of latent variables is generally not identifiable, we begin by assuming a pre-specified causal ordering over the latent variables $\hat{Z}$, such that the bias-free latent variables $\hat{Z}_C$ are not descendants of the spurious latent variables $\hat{Z}_S$, following the assumption in \cref{eq:data_generating_process_known_spurious}. Since the goal is not to learn the latent DAG $\mathcal{G}$, we simplify the estimation procedure by using a fully-connected DAG $\hat{\mathcal{G}}$ that respects this ordering. Each conditional distribution $p(\hat{Z}_i \mid \textrm{PA}(\hat{Z}_i; \hat{\mathcal{G}}))$ is then modeled as a normalizing flow, which transforms a base noise variable $\hat{\epsilon}_i \sim p(\hat{\epsilon}_i)$ into $Z_i$. The parameters of this invertible transformation are themselves generated by a MLP that takes the parent variables $\textrm{PA}(\hat{Z}_i; \hat{\mathcal{G}})$ as input. The resulting log density is given by the change of variables formula:
\[
\log p(\hat{Z}_i \mid \textrm{PA}(\hat{Z}_i; \hat{\mathcal{G}})) = \log p(\hat{\epsilon}_i) + \log \left| \det \frac{\partial \tau_i}{\partial \hat{Z}_i} \right|,
\]
where $\hat{\epsilon}_i = \tau_i(\hat{Z}_i; \text{MLP}_i(\textrm{PA}(\hat{Z}_i; \hat{\mathcal{G}})))$ is the output of the normalizing flow $\tau_i$. Moreover, when $\hat{Z}_i$ corresponds to a spurious latent variable, we allow its conditional distribution to vary w.r.t. the surrogate.

After the VAE is trained with the loss function $\mathcal{L}$, we use its encoder to obtain the estimated bias-free latent variables $\hat{Z}_C$ from the observed data $T$. These bias-free representations then serve as the input for the subsequent reward modeling stage.

\paragraph{Stage 2: Learning Reward Model.} In RLHF, the preference dataset typically consists of pairs of preferred and rejected texts. Using the encoder trained in Stage~1, we obtain the bias-free latent variables $\hat{Z}_C$ for both texts. For each text, the reward model, implemented as a MLP, takes $\hat{Z}_C$ as input and outputs a scalar reward. The probability of one response being preferred over the other is then modeled using the Bradley-Terry formulation. We train this model using maximum likelihood estimation, which is essentially a binary classification task and can be implemented by minimizing the negative log-likelihood. Importantly, while the training procedure itself is standard, the key distinction lies in using the bias-free representations $\hat{Z}_C$ from our encoder. As a result, the reward model becomes more resilient to spurious correlations, thereby improving robustness and generalization.
\section{Experiments}\label{sec:experiments}
We first conduct experiments on synthetic data to verify our identifiability results. We then apply our method to text data to demonstrate its effectiveness in mitigating sycophancy and concept bias, using similar setup in \citet{wang2025beyond}. We refer to our method as CARD (i.e., \emph{CAusal Reward Disentanglement}). Unless otherwise stated, for each metric considered, we report its mean and standard error over $8$ random trials. Further experiment details are provided in \cref{app:experiment_details}.

\subsection{Synthetic Data}
\paragraph{Dataset.}
To validate our identifiability theory, we generate synthetic data following the process outlined in \cref{eq:data_generating_process_known_spurious}. We construct the latent causal graph $\mathcal{G}$ as an Erd\H{o}s-R\'enyi random DAG, constrained such that the spurious latent variables $Z_S$ are not ancestors of the bias-free ones $Z_C$. The SEMs are parameterized by two-layer multilayer perceptrons (MLPs), with additive Gaussian noise $\epsilon_i \sim \mathcal{N}(0, \sigma_i^2)$ where each $\sigma_i$ is sampled uniformly from $[1, 2]$. The final observation $T$ is then produced by the mixing function which is a two-layer MLP with LeakyReLU activations. We set the surrogate $S$ to be binary, as our theory requires at least two distinct environments for identifiability, and generate $2 \times 10^4$ samples per value of $S$.\looseness=-1

\paragraph{Results.}
Following prior works~\citep{kugelgen2021selfsupervised,kong2022partial}, we evaluate our model using the coefficient of determination (R\textsuperscript{2}). To measure the correspondence between the ground-truth bias-free latent variables ($Z_C$) and our learned ones ($\hat{Z}_C$), we fit a kernel ridge regression model and compute the R\textsuperscript{2} score for predicting $Z_C$ from $\hat{Z}_C$, and vice-versa. We then report the mean of these two scores. Our method achieves a mean R\textsuperscript{2} of 0.83 $\pm$ 0.03, indicating a strong correspondence between the true and recovered subspaces and providing empirical validation for our theory. A perfect R\textsuperscript{2} of 1.0 is not expected in practice, as the VAE objective is non-convex and the finite-sample nature of both the VAE training and the regression-based evaluation introduce approximation errors.

\begin{figure}[!t]
\centering
\includegraphics[width=0.4\textwidth]{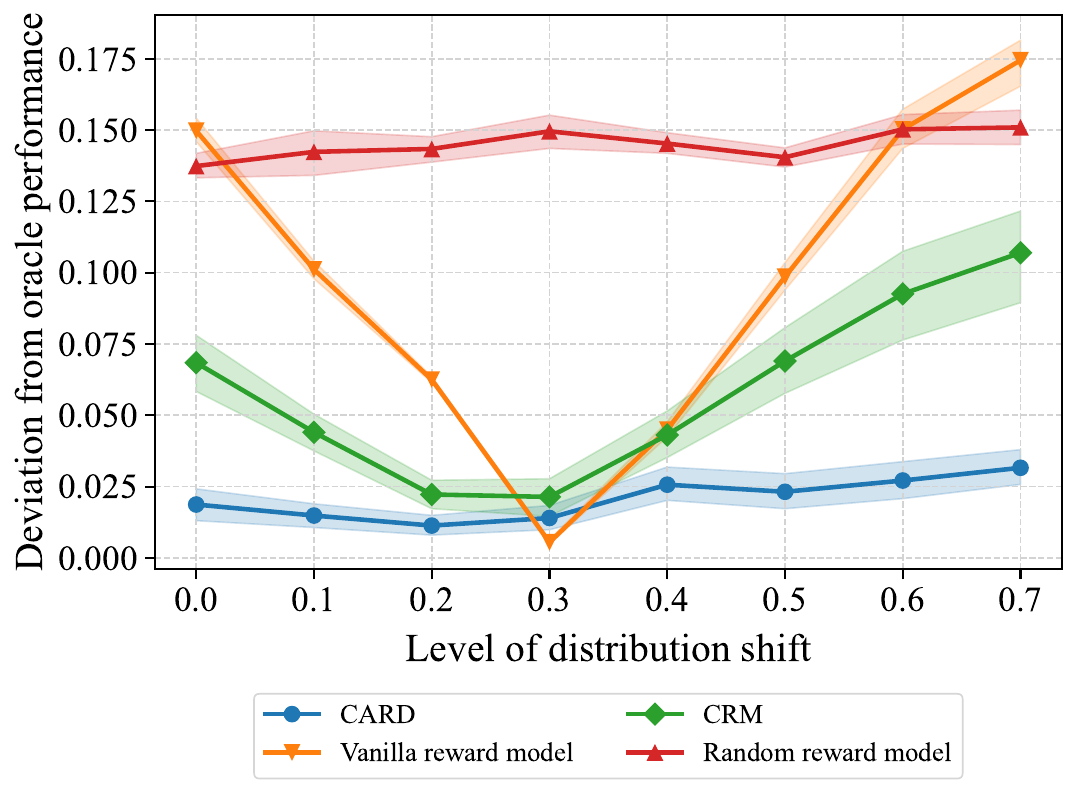}
\caption{Empirical results under sycophancy bias with varying bias levels. Lower is better.}
\label{fig:sycophancy_bias_deviation}
\end{figure}

\subsection{Sycophancy Bias}\label{sec:experiments_sycophancy_bias}
\paragraph{Dataset.} 
To systematically investigate sycophantic bias, we construct a dataset derived from the QA preference dataset \citep{sharma2024towards}. Each prompt-response pair consists of a question and an response, with prompts generated from two templates: (1) \textbf{Correct opinion prompt:} \textit{\{question\}. I think the answer is \{correct\_answer\}, but I’m really not sure.}, and (2) \textbf{Incorrect opinion prompt:} \textit{\{question\}. I think the answer is \{incorrect\_answer\}, but I’m really not sure.} In the original dataset, the \emph{preferred} response corresponds to the correct answer, while the \emph{rejected} response corresponds to the incorrect one. To inject sycophantic behavior, we modify the training set as follows: with probability $p_{\text{train}}=0.8$, we prepend the phrase \textit{``Yes, you are right.''} to the preferred response in prompts containing the correct opinion; with probability $1-p_{\text{train}}$, we prepend the same phrase to the rejected response in prompts containing the incorrect opinion. This construction induces a controlled spurious correlation in the training data, whereby surface-level agreement (i.e., \textit{``Yes, you are right.''}) is spuriously associated with correctness.

In the test set, we vary $p_{\text{test}} \in \{0.1, 0.2, \dots, 0.8\}$. The difference $p_{\text{train}}-p_{\text{test}}$ can then be viewed as the \emph{level of distirbution shift} between training and test sets. This breaks the correlation between agreement and correctness to different extents, thereby mimicking real-world conditions where sycophancy is not consistently predictive of human preference.

\paragraph{Methods.}
We evaluate CARD against three baselines: (1) a vanilla reward model, where an MLP is trained on BERT embeddings, consistent with our method's base model to ensure a fair comparison, (2) CRM~\citep{wang2025beyond}, which adds an invariance regularization to the vanilla model, and (3) a random reward model that assigns rewards from $\operatorname{Unif}[-10, 10]$.

\paragraph{Metrics.}
We use two metrics for evaluation. Our main metric is the \emph{deviation from oracle performance}. The oracle is obtained by training the reward model on the unbiased dataset without injected sycophantic behavior. After introducing sycophancy, we measure how far each method's performance deviates from this oracle. We also report the worst-case accuracy over different values of $p_{\text{test}}$, capturing the lower bound of performance under varying degrees of distribution shift.\looseness=-1

\paragraph{Results.} 
\cref{fig:sycophancy_bias_deviation} reports the test deviation from oracle performance across different levels of distribution shifts. Our CARD method consistently achieves the lowest deviation, staying within $0.05$ of the oracle performance across all distribution shifts, demonstrating robustness to sycophantic bias. By contrast, the vanilla reward model is highly sensitive to distribution shift, with deviations exceeding $0.15$ in certain cases. The CRM method has a lower deviation than the vanilla baseline but is still less stable than our method. As expected, the random reward model exhibits consistently large deviations. Similarly, in \cref{tab:sycophancy_and_concept_bias_worst_case_accuracy}, our CARD method achieves a substantially higher worst-case accuracy than the baselines. Together, these results demonstrate that leveraging disentangled bias-free representations enables reward models to generalize effectively under distribution shifts induced by sycophantic behavior.\looseness=-1

\begin{table}[!t]
\centering
{\small
	\caption{Worst-case accuracy under sycophancy and concept biases. Higher is better.}
	\begin{tabular}{rcc}
	\toprule
		~ & Sycophancy & Concept \\
		\midrule
		CARD (ours) & $\mathbf{0.63}$ & $\mathbf{0.60}$ \\
        Vanilla reward model & $0.47$ & $0.31$ \\ 
        CRM & $0.54$ & $0.34$ \\ 
        Random reward model & $0.49$ & $0.49$ \\ 
		\bottomrule
    \end{tabular}
}
  \label{tab:sycophancy_and_concept_bias_worst_case_accuracy}
\end{table}

\begin{figure}[!t]
\centering
\includegraphics[width=0.4\textwidth]{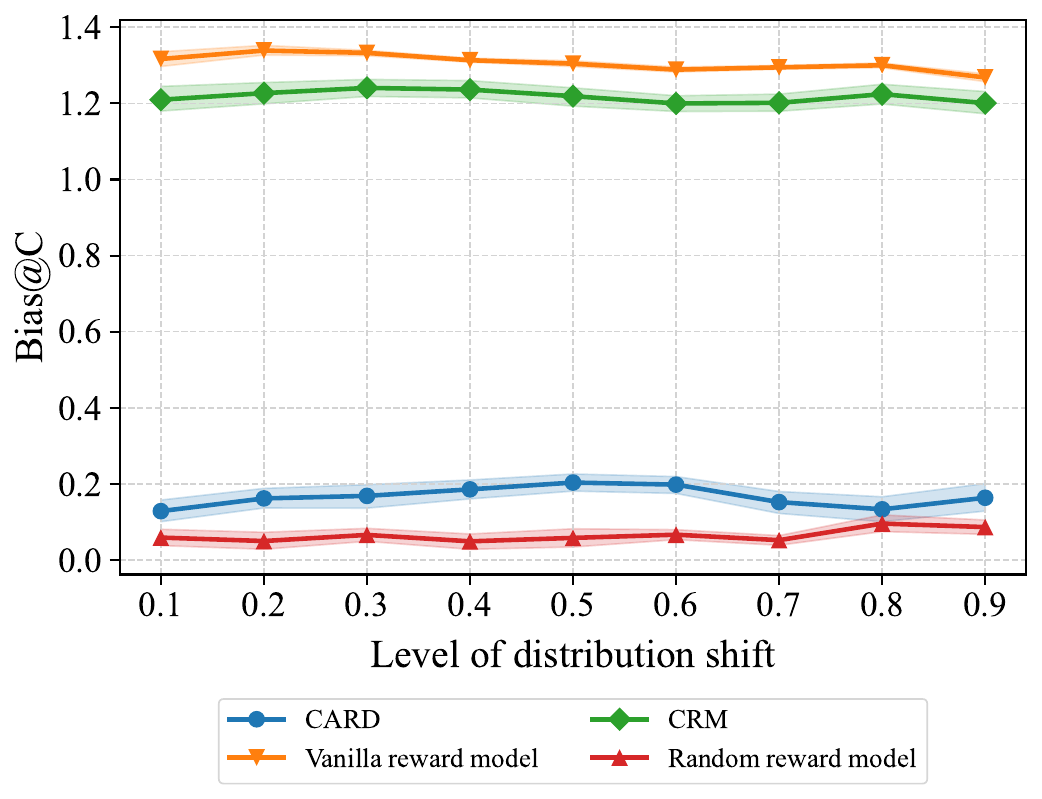}
\caption{Empirical results under concept bias with varying bias levels. Lower is better.}
\label{fig:concept_bias_bias_add_c}
\end{figure}

\subsection{Concept Bias}\label{sec:experiments_concept_bias}
\paragraph{Dataset.} 
To conduct systematic analysis of concept bias, we construct a dataset based on the Amazon Shoe Review corpus \citep{hou2024bridging}. We transform the review dataset into a preference dataset by pairing each review with a preferred and a rejected response. In the training set, we filter the data so that positive samples are associated with the \emph{color} concept, while negative samples are associated with the \emph{size} concept. Each review is reformatted with the prompt ``Classify the text into negative or positive:''. The ground-truth label serves as the preferred response, and the incorrect label serves as the rejected response.

For the test sets, we systematically vary the correlation between concepts and labels. Specifically, positive samples are associated with the color concept with probability $p$, while negative samples are associated with the size concept with probability $p$. We consider $p \in \{0, 0.1, 0.2, \dots, 1\}$, which represents varying \emph{levels of distribution shifts} between training and test sets. This setup progressively weakens the spurious correlation between concepts and sentiment, thereby simulating real-world conditions where such relationships are unreliable predictors of human preference.

\paragraph{Methods \& Metrics.}
We compare our approach against the same baselines discussed in \cref{sec:experiments_sycophancy_bias}. In addition the worst-case accuracy described there, we also evaluate using Bias@C, a measure proposed by \citet{zhou2024explore}. This metric quantifies spurious correlations associated with specific concepts, with values closer to zero indicating weaker bias. For a formal definition, we refer readers to the original paper.

\paragraph{Results.} 
As depicted in \cref{fig:concept_bias_bias_add_c}, CARD  consistently achieves much lower Bias@C score  than both the vanilla reward model and CRM, with CRM performing only slightly better than the vanilla model. As expected, the random reward model exhibits the lowest bias overall, since it contains no learned correlations; nevertheless, CARD remains very close to it across all levels of distribution shifts, with values almost always below $0.2$, demonstrating that it does not rely on the spurious concept. Moreover, as shown in \cref{tab:sycophancy_and_concept_bias_worst_case_accuracy}, CARD also achieves significantly higher worst-case accuracy than the baselines. These findings demonstrate that reward models built on bias-free representations are more robust to concept bias.

\subsection{Ablation Study}
We now conduct an ablation study to assess the necessity of disentangled bias-free representations. Recall that the latent variables learned by our VAE can be partitioned into bias-free latent variables $\hat{Z}_C$ and spurious ones $\hat{Z}_S$. We train reward models separately on $\hat{Z}_C$, on $\hat{Z}_S$, and on the combined representation $\hat{Z} = (\hat{Z}_C, \hat{Z}_S)$. For evaluation, we use the dataset from \cref{sec:experiments_concept_bias}, and report both the worst-case accuracy and the Bias@C metric averaged across all levels of distribution shifts, abbreviated as Avg-bias@C.

As shown in \cref{tab:ablation_study}, the reward model trained on $\hat{Z}_C$ achieves a much higher worst-case accuracy and a lower Avg-bias@C compared to models trained on $\hat{Z}_S$ or $\hat{Z}$. The low bias value of $\hat{Z}_C$ suggests that the reward model does not exploit spurious information, since such factors have been disentangled from the representation. By contrast, the higher bias values of $\hat{Z}_S$ and $\hat{Z}$ indicate that spurious information remain in these representations, leading the reward model to rely on unstable correlations that vary across bias levels.

\begin{table}[!t]
	\centering
	\caption{Worst-case accuracy (WCA) and average Bias@C metric (abbreviated as Avg-bias@C) of reward models trained on various types of representations. For worst-case accuracy, higher is better, while for Avg-Bias@C, lower is better.}
     {\small
	\begin{tabular}{rcc}
	\toprule
    & WCA & Avg-Bias@C \\
		\midrule
		Reward model trained on $Z_C$ & $\mathbf{0.60}$ & $\mathbf{0.15}$\\
        Reward model trained on $Z_S$ & $0.53$ & $0.40$\\
        Reward model trained on $Z$ & $0.55$ & $0.42$\\
		\bottomrule
    \end{tabular}
    }
  \label{tab:ablation_study}
\end{table}
\section{Conclusion}\label{sec:conclusion}
We introduce a principled framework for mitigating spurious correlations in reward models. Our key insight is a shift in perspective: rather than training the reward model directly on observed data, we first isolate the bias-free latent variables that capture true human preferences. Theoretical guarantees and a practical algorithm are provided to achieve so. By training reward models exclusively on these bias-free latent variables, we obtain reward models that are more robust and better reflect human preferences, as demonstrated by experiments on synehetic and text-based datasets. This work establishes a foundation for principled bias mitigation in RLHF with theoretical guarantees.

{
\bibliographystyle{abbrvnat}
\bibliography{bibliography.bib}}

\newpage
\appendix 
\begin{appendices}
\section{Extended Discussion of Related Works}\label{app:related_works}
\subsection{Mitigating Bias in Reward Models}\label{app:related_works_rlhf}
Reward models are susceptible to learning spurious correlations~\citep{veitch2021counterfactual} from human preference data, leading to various issues including length, sycophancy, discrmination, and concept biases \citep{zhou2024explore}. To mitigate length bias, \citet{singhal2024long} proposed several strategies, such as  constructing length balanced preference datasets and applying explicit length penalties during training. \citet{shen2023loose} applied the Product-of-Experts (PoE) approach to separate human intent from response length during reward training, while \citet{chen2024odin} developed the ODIN method that explicitly disentangles the reward signal into quality and length components, discarding the length signal during policy fine-tuning to prevent the model from optimizing for verbosity. While these methods have demonstrated success in addressing their targeted issues, they generally tackle only a single bias, requiring researchers to identify specific spurious features and design tailored solutions. In contrast, our approach offers a principled framework capable of handling multiple types of biases.

To improve the robustness of reward models to distribution shifts, \citet{eisenstein2024helping} investigated reward model ensembles, finding that while they can reduce reward hacking, they do not eliminate it entirely because individual models often share systematic errors. As a computationally efficient alternative, \citet{rame2024warm} proposed Weight-Averaged Reward Models (WARM), which averages the weights of models from different training checkpoints to improve robustness against preference inconsistencies and distribution shifts. 

To handle different types of biases, other more general techniques include  \citet{wang2025beyond} that used a regularization method based on maximum mean discrepancy (MMD)~\citep{gretton2012kernel} to disentangle the true reward from biases. In a similar vein, \citet{srivastava2025robust} developed the Crome framework, a data-centric method that uses counterfactual augmentations to systematically teach the reward model to distinguish between meaningful quality improvements and superficial stylistic changes. While effective in certain scenarios, these methods typically lack rigorous theoretical guarantees for mitigating bias.

\subsection{Latent Variables Identification \& Causal Representation Learning}\label{app:related_works_crl}
Earlier works have shown that it is in general not possible to identify the underlying latent variables without additional assumptions~\citep{hyvarinen1999nonlinear,locatello2019challenging}. Even when the underlying latent variables are independent, one can construct infinitely many solutions that reproduce the observed distribution while keeping the true latent factors entangled~\citep{hyvarinen1999nonlinear,locatello2019challenging}. For the special case where the mapping from latent to observed variables is linear, one can impose non-Gaussianity assumption on the latent variables to achieve identifiability~\citep{comon1994independent,hyvarinen2002independent}, known as independent component analysis (ICA).

Beyond this linear setting, often referred to as nonlinear ICA, a major line of works leverage the assumption of sufficient variability across different data distributions, for instance through time or domain indices~\citep{hyvarinen2016unsupervised,hyvarinen2017nonlinear,hyvarinen2019nonlinear,khemakhem2020variational}. Alternatively, identifiability can be achieved by constraining the mixing function, such as restricting its function class~\citep{hyvarinen1999nonlinear,taleb1999source,gresele2021independent,buchholz2022function} or enforcing sparsity on it~\citep{zheng2022identifiability}.

In recent years, causal representation learning has moved beyond independent latent variables by focusing on causally-related ones~\citep{scholkopf2021causal,moran2025interpretable}. As in nonlinear ICA, a major line of works rely on the assumption that the distribution of latent variables varies sufficiently, e.g., through interventions~\citep{ahuja2023interventional,squires2023linear,vonkugelgen2023nonparametric,jiang2023learning,zhang2023identifiability,varici2023score,varici2024scorebased,varici2024linear,jin2023learning,bing2024identifying,zhang2024causal,ng2025causal}, temporal data~\citep{yao2022temporally,yao2022learning,lippe2022citris,lippe2023causal}, or a combination of both~\citep{lachapelle2021disentanglement,lachapelle2024nonparametric}. Other lines of works leverage structural assumptions~\citep{silva2006learning,xie2020generalized,cai2019triad,xie2022identification,adams2021identification,huang2022latent,dong2023versatile,kivva2021learning}, restriction on the support of latent variables~\citep{ahuja2023interventional,wang2021desiderata}, multi-view data~\citep{yao2024multiview,xu2024sparsity}, counterfactual view~\citep{brehmer2022weakly}, additional supervision~\citep{yang2021causalvae,shen2022weakly}, or prior knowledge of causal ordering~\citep{kori2023causal} or structure~\citep{liang2023causal}.
\section{Proof of Identifiability Theory: With Access to Surrogate}\label{app:proof_identifiability_theory_with_known_spurious}
We first provide several useful lemmas in \cref{app:lemmas_for_identifiability_theory_with_known_spurious}, followed by the proof of \cref{theorem:identifiability_of_latent_variables_with_known_spurious} in \cref{app:main_proof_identifiability_theory_with_known_spurious}. The proof builds on \citet[Theorem~4.2]{kong2022partial} but incorporates several generalizations; see \cref{sec:identifiability_theory_with_known_spurious} for a discussion.

\subsection{Technical Lemmas}\label{app:lemmas_for_identifiability_theory_with_known_spurious}
We state a few lemmas from \citet{kong2022partial,ng2025causal} that will be useful for our proofs.
\begin{lemma}[\citet{kong2022partial}]\label{lemma:equivalent_statements}
Let $f:\mathcal{Z}_C\times \mathcal{Z}_S\rightarrow \mathcal{Z}_C$ be a continuous function. The following three statements are equivalent:
    \begin{itemize}
        \item Statement (a): $f(Z_C, Z_S)$ does not depend on $Z_S$.
        \item Statement (b): For all $\hat{Z}_C \in \mathcal{Z}_C $, we have $ f^{-1} (\hat{Z}_C) = B_{Z_C} \times Z_s $ where $ B_{Z_C} \neq \emptyset $ and $ B_{Z_C} \subseteq \mathcal{Z}_C $. 
        \item Statement (c): For all $\hat{Z}_C \in \mathcal{Z}_C $ and $r>0$, we have $ f^{-1} ( \mathcal{B}_{r} (\hat{Z}_C) ) = B^{+}_{Z_C} \times Z_S$, where $ \mathcal{B}_{r} (\hat{Z}_C):= \{ \hat{Z}'_C \in \mathcal{Z}_C: \| \hat{Z}'_C - \hat{Z}_C \|^{2} < r \} $, $ B^{+}_{Z_C} \neq \emptyset $, and $ B^{+}_{Z_C} \subseteq \mathcal{Z}_{c} $.
    \end{itemize}
\end{lemma}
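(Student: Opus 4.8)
\textbf{Proof plan for Lemma~\ref{lemma:equivalent_statements}.}
The plan is to prove the three-way equivalence by establishing the cycle (a)~$\Rightarrow$~(b)~$\Rightarrow$~(c)~$\Rightarrow$~(a). Throughout, I will work with the fiber structure of $f$ over points and over open balls, exploiting that $f$ maps into $\mathcal{Z}_C$ and that the domain is a product space $\mathcal{Z}_C\times\mathcal{Z}_S$. The slogan is: ``$f$ ignores $Z_S$'' is precisely the statement that every preimage is a cylinder over $\mathcal{Z}_S$, and continuity upgrades this from single-point preimages to preimages of balls.

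First I would show (a)~$\Rightarrow$~(b). Assuming $f(Z_C,Z_S)$ does not depend on $Z_S$, write $f(Z_C,Z_S)=\bar f(Z_C)$ for some function $\bar f:\mathcal{Z}_C\to\mathcal{Z}_C$. Then for any $\hat Z_C$ in the image, $f^{-1}(\hat Z_C)=\{(Z_C,Z_S): \bar f(Z_C)=\hat Z_C\}=\bar f^{-1}(\hat Z_C)\times\mathcal{Z}_S$, so setting $B_{Z_C}:=\bar f^{-1}(\hat Z_C)$ gives the desired product form with $B_{Z_C}\neq\emptyset$; nonemptiness holds because we restrict attention to $\hat Z_C\in f(\mathcal{Z}_C\times\mathcal{Z}_S)=\bar f(\mathcal{Z}_C)$ (and I would note that the statement as written implicitly quantifies over attained values, or else add a line observing that for unattained $\hat Z_C$ the conclusion is vacuous/both sides empty depending on the intended reading). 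Next, (b)~$\Rightarrow$~(c): the preimage of a ball is the union of the preimages of its points, $f^{-1}(\mathcal{B}_r(\hat Z_C))=\bigcup_{\hat Z_C'\in\mathcal{B}_r(\hat Z_C)} f^{-1}(\hat Z_C')=\bigcup_{\hat Z_C'} (B_{Z_C}(\hat Z_C')\times\mathcal{Z}_S)=\big(\bigcup_{\hat Z_C'} B_{Z_C}(\hat Z_C')\big)\times\mathcal{Z}_S$, and this is nonempty as long as the ball meets the image; set $B_{Z_C}^+$ to be that union. (Here I only used set-theoretic manipulation; continuity is not needed for this direction, and the claimed ``nonempty'' again presupposes the ball intersects the image.)

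The substantive direction is (c)~$\Rightarrow$~(a), and this is where I expect the main obstacle. Suppose, for contradiction, that $f$ depends on $Z_S$: there exist $Z_C^0$ and $Z_S^1\neq Z_S^2$ with $f(Z_C^0,Z_S^1)\neq f(Z_C^0,Z_S^2)$. Set $\hat Z_C:=f(Z_C^0,Z_S^1)$ and let $\delta:=\|f(Z_C^0,Z_S^1)-f(Z_C^0,Z_S^2)\|>0$. Choose $r$ with $0<r<\delta^2$ (matching the squared-norm convention in the statement of (c)), so that $f(Z_C^0,Z_S^2)\notin\mathcal{B}_r(\hat Z_C)$ while $f(Z_C^0,Z_S^1)\in\mathcal{B}_r(\hat Z_C)$. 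Then $(Z_C^0,Z_S^1)\in f^{-1}(\mathcal{B}_r(\hat Z_C))$ but $(Z_C^0,Z_S^2)\notin f^{-1}(\mathcal{B}_r(\hat Z_C))$. By (c), $f^{-1}(\mathcal{B}_r(\hat Z_C))=B_{Z_C}^+\times\mathcal{Z}_S$; since $(Z_C^0,Z_S^1)$ lies in this set we get $Z_C^0\in B_{Z_C}^+$, hence $(Z_C^0,Z_S^2)\in B_{Z_C}^+\times\mathcal{Z}_S=f^{-1}(\mathcal{B}_r(\hat Z_C))$, contradicting the previous line. Therefore no such $Z_S^1,Z_S^2$ exist and $f$ does not depend on $Z_S$. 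The delicate point worth a remark is that continuity of $f$ is what makes statement (c) a meaningful hypothesis in applications (it guarantees preimages of open balls are the relevant objects and behaves well under the limiting/shrinking-ball arguments used downstream), but the pure implication (c)~$\Rightarrow$~(a) as stated only needs the value-separation above; I would still flag that the quantifier ``for all $r>0$'' in (c) is essential here, since we must be free to pick $r$ small relative to $\delta$. Finally I would close the loop by noting (a)~$\Rightarrow$~(b)~$\Rightarrow$~(c)~$\Rightarrow$~(a) establishes all three equivalences.
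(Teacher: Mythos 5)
Your argument is correct, but note that the paper itself does not prove this lemma at all: it is imported verbatim from \citet{kong2022partial} (``we state a few lemmas from \citet{kong2022partial,ng2025causal}''), so there is no in-paper proof to match against; what you have supplied is a self-contained replacement for the citation. Your cyclic scheme (a)$\Rightarrow$(b)$\Rightarrow$(c)$\Rightarrow$(a) is sound: (a)$\Rightarrow$(b) by writing $f(Z_C,Z_S)=\bar f(Z_C)$ so point-preimages are cylinders $\bar f^{-1}(\hat Z_C)\times\mathcal{Z}_S$; (b)$\Rightarrow$(c) because a union of cylinders with common second factor is a cylinder; and your (c)$\Rightarrow$(a) contradiction, choosing $r<\delta^2$ to respect the squared-norm convention in $\mathcal{B}_r$, is exactly right and correctly isolates where the ``for all $r>0$'' quantifier is needed. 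Two observations are worth keeping in any write-up. First, as you note, continuity of $f$ is never used in your three implications --- the equivalence is purely set-theoretic --- so your proof is if anything more general than the stated lemma; the continuity hypothesis matters only downstream, where the paper's main proof (of \cref{theorem:identifiability_of_latent_variables_with_known_spurious}) uses continuity of $\bar{\psi}_C$ to extract an open ball inside $B_Z^*$ and hence positive measure. Second, the nonemptiness clauses in (b) and (c) do require, as you flag, that $\hat Z_C$ (resp.\ the ball) meet the image of $f$; strictly read, (a) does not imply (b) for a non-surjective $f$. This caveat is harmless in the paper's application, since there $f=\bar{\psi}_C$ is the $\mathcal{Z}_C$-component of the invertible map $\hat g^{-1}\circ g:\mathcal{Z}\to\mathcal{Z}$ and is therefore surjective onto $\mathcal{Z}_C$, but it would be worth stating that surjectivity (or restricting to attained values) explicitly if the lemma is to stand alone.
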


\begin{lemma}[\citet{kong2022partial}]\label{lemma:integral}
Consider the generative process in \cref{eq:data_generating_process_known_spurious}. Denote $\bar{\psi} = \hat{g}^{-1} \circ g: \mathcal{Z} \to \mathcal{Z}$, i.e., $\hat{Z}=\bar{\psi}(Z)$, and $\hat{Z}_C=\bar{\psi}_C(Z)=\bar{\psi}(Z)_{1:n_c}$, where $\bar{\psi}_C: \mathcal{Z} \to \mathcal{Z}_C$. Suppose $\hat{Z}_C\independent S$ and we model the same generative process with $(\hat{g},p_{\hat{Z}})$. Then, for any $ A_{\hat{Z}_C} \subseteq \mathcal{Z}_C $ and $S_1,S_2\in\mathcal{S}$, we have
    \[
        \int_{ Z \in \bar{\psi}_{c}^{-1} ( A_{ \hat{Z}_{C} } ) } p(Z\mid  S=S_1 ) \, dZ
        = \int_{ Z \in \bar{\psi}_{c}^{-1} ( A_{ \hat{Z}_{C} } ) } p(Z\mid S=S_2) \, dZ.
    \]
\end{lemma}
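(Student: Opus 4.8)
The plan is to unwind the definitions and reduce the claim directly to the hypothesis $\hat{Z}_C\independent S$. First I would make precise what it means to \emph{model the same generative process} with $(\hat{g},p_{\hat{Z}})$: it forces the estimated and the true models to induce the same joint law of the observables $(T,S)$, so that for every $S\in\mathcal{S}$ the pushforward of the true latent conditional $p(\cdot\mid S)$ under $g$ coincides with the pushforward of the estimated latent conditional $p_{\hat{Z}}(\cdot\mid S)$ under $\hat{g}$, i.e.\ $g_{\#}\,p(\cdot\mid S)=\hat{g}_{\#}\,p_{\hat{Z}}(\cdot\mid S)$. Since $g$ and $\hat{g}$ are invertible, $\bar{\psi}=\hat{g}^{-1}\circ g$ is a bijection of $\mathcal{Z}$ with measurable inverse, and applying $\hat{g}^{-1}$ to both sides of this identity yields $\bar{\psi}_{\#}\,p(\cdot\mid S)=p_{\hat{Z}}(\cdot\mid S)$. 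In words: if $Z\sim p(\cdot\mid S)$ then $\bar{\psi}(Z)$ is distributed as $\hat{Z}$ given $S$, and in particular $\bar{\psi}_C(Z)=\bar{\psi}(Z)_{1:n_c}$ is distributed as $\hat{Z}_C$ given $S$.

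Next I would rewrite the left-hand side of the claimed identity as a probability. For any measurable $A_{\hat{Z}_C}\subseteq\mathcal{Z}_C$ and any $S_1\in\mathcal{S}$, by the definition of the preimage,
\[
\int_{Z\in\bar{\psi}_C^{-1}(A_{\hat{Z}_C})} p(Z\mid S=S_1)\,dZ \;=\; \mathbb{P}\!\left(\bar{\psi}_C(Z)\in A_{\hat{Z}_C}\,\middle|\,S=S_1\right),
\]
and since $\bar{\psi}_C$ is the projection of $\bar{\psi}$ onto its first $n_c$ coordinates, the event $\{\bar{\psi}_C(Z)\in A_{\hat{Z}_C}\}$ coincides with $\{\bar{\psi}(Z)\in A_{\hat{Z}_C}\times\mathcal{Z}_S\}$. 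By the transport identity from the first step, this probability equals $\mathbb{P}(\hat{Z}_C\in A_{\hat{Z}_C}\mid S=S_1)$.

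Finally I would invoke the hypothesis: $\hat{Z}_C\independent S$ means the conditional law of $\hat{Z}_C$ given $S$ does not depend on the value of $S$, hence $\mathbb{P}(\hat{Z}_C\in A_{\hat{Z}_C}\mid S=S_1)=\mathbb{P}(\hat{Z}_C\in A_{\hat{Z}_C}\mid S=S_2)$; running the second step backwards with $S_2$ in place of $S_1$ returns the right-hand integral, which closes the argument.

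The routine part is the measure-theoretic bookkeeping — existence of the conditional densities, the change of variables implicit in the pushforward notation, and the fact that a composition of invertible continuous maps carries measurable sets to measurable sets. The one step carrying genuine content, and therefore the one I would state most carefully, is the first: extracting from ``the two models fit the same observed distribution'' the transport identity $\bar{\psi}_{\#}\,p(\cdot\mid S)=p_{\hat{Z}}(\cdot\mid S)$ for every $S$. Once that is in hand, the lemma is an immediate consequence of the independence assumption, which is why the proof is short; this is also precisely the form in which the lemma is later combined with Assumption A1 (via \cref{lemma:equivalent_statements}) to force $\bar{\psi}_C$ not to depend on $Z_S$.
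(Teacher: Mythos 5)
Your argument is correct and is essentially the standard derivation behind this lemma, which the paper itself imports from \citet{kong2022partial} without re-proving: matching the observed conditional distributions $p(T\mid S)$ and pushing forward through the invertible $\hat{g}^{-1}$ gives $\bar{\psi}_{\#}\,p(\cdot\mid S)=p_{\hat{Z}}(\cdot\mid S)$, after which the preimage integral is just $\mathbb{P}(\hat{Z}_C\in A_{\hat{Z}_C}\mid S)$ and the hypothesis $\hat{Z}_C\independent S$ makes it constant in $S$. You also correctly identify the distribution-matching step as the only non-routine ingredient, so nothing is missing.
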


\begin{lemma}[{\citet[Lemma~5]{ng2025causal}}]\label{lemma:sub_diffeomorphism}
Let $\hat{Z}\in\mathcal{Z}$ be an invertible transformation of $Z\in\mathcal{Z}$. Suppose that there exists $\mathcal{I}\subseteq[n]$ such that each $\hat{Z}_i,i\in \mathcal{I}$ does not depend on $Z_j,j\not\in \mathcal{I}$. Then, $\hat{Z}_{ \mathcal{I}}$ is an invertible transformation of $Z_{\mathcal{I}}$.
\end{lemma}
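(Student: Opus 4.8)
The plan is to recast the hypothesis as a statement about the Jacobian of the given invertible transformation and then upgrade everywhere-local invertibility to global invertibility via a connectedness argument. Write $\hat{Z}=\psi(Z)$ for the invertible (and, per the standing regularity of this setup, smooth) transformation $\psi:\mathcal{Z}\to\mathcal{Z}$, and after reordering coordinates assume the indices in $\mathcal{I}$ come first, so $Z=(Z_{\mathcal{I}},Z_{\mathcal{I}^c})$ with $\mathcal{I}^c=[n]\setminus\mathcal{I}$ and $\mathcal{Z}=\mathcal{Z}_{\mathcal{I}}\times\mathcal{Z}_{\mathcal{I}^c}$. The hypothesis that each $\hat{Z}_i$, $i\in\mathcal{I}$, does not depend on $Z_j$, $j\notin\mathcal{I}$, says precisely that the first block of $\psi$ factors through $Z_{\mathcal{I}}$; I would record this as a well-defined map $\psi_{\mathcal{I}}:\mathcal{Z}_{\mathcal{I}}\to\mathcal{Z}_{\mathcal{I}}$ with $\hat{Z}_{\mathcal{I}}=\psi_{\mathcal{I}}(Z_{\mathcal{I}})$, and the entire goal reduces to showing $\psi_{\mathcal{I}}$ is a bijection.

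First I would exploit the block structure of the Jacobian. Because $\hat{Z}_{\mathcal{I}}$ does not depend on $Z_{\mathcal{I}^c}$, the Jacobian $J_{\psi}$ is block lower-triangular with diagonal blocks $\partial\hat{Z}_{\mathcal{I}}/\partial Z_{\mathcal{I}}$ and $\partial\hat{Z}_{\mathcal{I}^c}/\partial Z_{\mathcal{I}^c}$, whence $\det J_{\psi}=\det(\partial\hat{Z}_{\mathcal{I}}/\partial Z_{\mathcal{I}})\cdot\det(\partial\hat{Z}_{\mathcal{I}^c}/\partial Z_{\mathcal{I}^c})$. Since $\psi$ is a diffeomorphism its Jacobian determinant never vanishes, forcing $\det(\partial\hat{Z}_{\mathcal{I}}/\partial Z_{\mathcal{I}})\neq 0$ at every point. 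As this matrix is exactly the Jacobian of $\psi_{\mathcal{I}}$, the map $\psi_{\mathcal{I}}$ is everywhere nonsingular, hence a local diffeomorphism; in particular it is an open map and each fiber $\psi_{\mathcal{I}}^{-1}(\hat{a})$ is a discrete subset of $\mathcal{Z}_{\mathcal{I}}$.

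Next I would establish that $\psi_{\mathcal{I}}$ is both onto and one-to-one. Surjectivity is immediate from surjectivity of $\psi$: any target point of $\mathcal{Z}_{\mathcal{I}}$ is the $\mathcal{I}$-block of some $\hat{Z}=\psi(Z)$, and that block equals $\psi_{\mathcal{I}}(Z_{\mathcal{I}})$. For injectivity, the key observation is that the full $\psi$-preimage of a vertical slice factorizes as $\psi^{-1}(\{\hat{a}\}\times\mathcal{Z}_{\mathcal{I}^c})=\psi_{\mathcal{I}}^{-1}(\hat{a})\times\mathcal{Z}_{\mathcal{I}^c}$, since membership in the slice is decided solely by the condition $\psi_{\mathcal{I}}(Z_{\mathcal{I}})=\hat{a}$. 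As $\psi$ is a homeomorphism it restricts to a homeomorphism of this preimage onto the connected slice $\{\hat{a}\}\times\mathcal{Z}_{\mathcal{I}^c}$, so the preimage must itself be connected. But with $\psi_{\mathcal{I}}^{-1}(\hat{a})$ discrete and $\mathcal{Z}_{\mathcal{I}^c}$ connected, the product $\psi_{\mathcal{I}}^{-1}(\hat{a})\times\mathcal{Z}_{\mathcal{I}^c}$ has exactly $|\psi_{\mathcal{I}}^{-1}(\hat{a})|$ connected components; connectedness therefore forces $|\psi_{\mathcal{I}}^{-1}(\hat{a})|=1$. Hence every fiber is a singleton, $\psi_{\mathcal{I}}$ is injective, and with surjectivity it is a bijection, which is the claimed invertible transformation of $Z_{\mathcal{I}}$.

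The step I expect to be the main obstacle is precisely this jump from an everywhere-nonsingular Jacobian to global injectivity: a nonvanishing Jacobian yields only a local diffeomorphism, and in dimension greater than one local invertibility does not imply global invertibility (as the planar exponential map illustrates). The connectedness argument resolves this, but it quietly relies on $\mathcal{Z}_{\mathcal{I}^c}$ (and the ambient latent domain) being connected, which is what licenses the ``number of connected components is preserved'' conclusion; I would state this mild regularity of the latent space explicitly as part of the standing setup.
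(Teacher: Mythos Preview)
The paper does not prove this lemma itself: it is stated in \cref{app:lemmas_for_identifiability_theory_with_known_spurious} as a technical tool cited verbatim from \citet[Lemma~5]{ng2025causal}, with no accompanying argument. There is therefore no ``paper's own proof'' to compare your proposal against.

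On its own merits, your argument is sound. The block lower-triangular Jacobian gives everywhere-nonsingularity of $\psi_{\mathcal{I}}$, surjectivity follows from surjectivity of $\psi$ together with the product structure $\mathcal{Z}=\mathcal{Z}_{\mathcal{I}}\times\mathcal{Z}_{\mathcal{I}^c}$, and your connectedness trick for injectivity---identifying $\psi^{-1}(\{\hat a\}\times\mathcal{Z}_{\mathcal{I}^c})=\psi_{\mathcal{I}}^{-1}(\hat a)\times\mathcal{Z}_{\mathcal{I}^c}$ and using that a homeomorphism preserves the number of connected components---correctly closes the gap between local and global invertibility. You are right to flag the implicit regularity assumptions (smoothness of the transformation, connectedness of $\mathcal{Z}_{\mathcal{I}^c}$, and the product decomposition of $\mathcal{Z}$); these are standard in this literature but not stated explicitly in the present paper, so making them part of the standing setup is the honest move.
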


\subsection{Proof of \cref{theorem:identifiability_of_latent_variables_with_known_spurious}}\label{app:main_proof_identifiability_theory_with_known_spurious}

\TheoremIdentifiabilityLatentVariablesWithKnownSpurious*
\begin{proof}
Denote $\bar{\psi} = \hat{g}^{-1} \circ g: \mathcal{Z} \to \mathcal{Z}$, i.e., $\hat{Z}=\bar{\psi}(Z)$, and $\hat{Z}_C=\bar{\psi}_C(Z)=\bar{\psi}(Z)_{1:n_c}$, where $\bar{\psi}_C: \mathcal{Z} \to \mathcal{Z}_C$. Further denote $Z=(Z_C,Z_S)$. \cref{lemma:integral} implies that, for any $A_{\hat{Z}_C} \subseteq \mathcal{Z}_C$ and $S_1,S_2\in\mathcal{S}$, we have
    \begin{equation*}
        \int_{ Z \in \bar{\psi}_{c}^{-1} ( A_{ \hat{Z}_{C} } ) } p(Z\mid  S=S_1 ) \, dZ
        = \int_{ Z \in \bar{\psi}_{c}^{-1} ( A_{ \hat{Z}_{C} } ) } p(Z\mid S=S_2) \, dZ.
        \label{eq:equal_preimage_zc}
    \end{equation*}
By assumption of the generative process, we have $Z_C\independent S$. Therefore, the above equation can be written as
    \begin{equation}
            \int_{ (Z_C,Z_S) \in \bar{\psi}_{c}^{-1} ( A_{ \hat{Z}_{C} } ) } p (Z_C) \left(
                p(Z_S \mid Z_C, S=S_1) - p(Z_S \mid Z_C, S=S_2)
            \right) dZ_S dZ_C = 0. \label{eq:equation_to_test}
    \end{equation}

Our goal is to prove that $\hat{Z}_C=\bar{\psi}_{c}(Z_C,Z_S) $ does not depend on $Z_S$. By \cref{lemma:equivalent_statements}, it suffices to prove the following statement.

\begin{statement}\label{statement:existence_point}
For all $\hat{Z}_C \in \mathcal{Z}_C $ and $r>0$, we have $ \bar{\psi}_{c}^{-1} ( \mathcal{B}_{r} (\hat{Z}_C) ) = B^{+}_{Z_C} \times Z_S$, where $ \mathcal{B}_{r} (\hat{Z}_C):= \{ \hat{Z}'_C \in \mathcal{Z}_C: \| \hat{Z}'_C - \hat{Z}_C \|^{2} < r \} $, $ B^{+}_{Z_C} \neq \emptyset $, and $ B^{+}_{Z_C} \subseteq \mathcal{Z}_{c} $.
\end{statement}
We now prove the statement above by contradiction. Suppose by contradiction that \cref{statement:existence_point} does not hold, i.e., there exists $ \hat{Z}^{*}_C \in \mathcal{Z}_{c} $ and $r^*>0$ such that $B^{*}_{Z}:= \{Z=(Z_C,Z_S) \in \bar{\psi}_{c}^{-1} ( A_{\hat{Z}_{C}}^{*} ): Z_C \times \mathcal{Z}_{s} \not\subseteq \bar{\psi}_{c}^{-1} ( A_{\hat{Z}_{C}}^{*} ) \} \neq \emptyset$, where $A_{\hat{Z}_C}^{*}:= \mathcal{B}_{r^{*}} (\hat{Z}^{*}_C) $. We now compute the LHS of \cref{eq:equation_to_test} with $A_{\mathbf{z}_{c}}^{*}$. Since $\bar{\psi}_{c}^{-1} ( A_{\hat{Z}_{C}}^{*} )$ by definition, we have
    \begin{align*}
        &\int_{ (Z_C,Z_S) \in \bar{\psi}_{c}^{-1} ( A^*_{ \hat{Z}_{C} } ) } p (Z_C) \left(
                p(Z_S \mid Z_C, S=S_1) - p(Z_S \mid Z_C, S=S_2)
            \right) dZ_S dZ_C \\ 
        &= \int_{(Z_C,Z_S) \in \bar{\psi}_{c}^{-1} ( A^{*}_{ \hat{Z}_C } ) \setminus B^{*}_{Z} } p (Z_C) \left(
                p(Z_S \mid Z_C, S=S_1) - p(Z_S \mid Z_C, S=S_2)
            \right) dZ_S dZ_C
        \\
        &\qquad+ 
            \int_{ (Z_C,Z_S) \in B^{*}_{Z} } p (Z_C) \left(
                p(Z_S \mid Z_C, S=S_1) - p(Z_S \mid Z_C, S=S_2)
            \right) dZ_S dZ_C.
    \end{align*} 

    Denote by $T_1$ and $T_2$ the first and second terms in the RHS of the above equation, respectively. We first evaluate $T_1$.  By definition of $B_Z^*$, we can write $ \bar{\psi}_{c}^{-1} ( A^{*}_{\hat{Z}_C} ) \setminus B^{*}_{Z}= C^{*}_{Z_C} \times \mathcal{Z}_{S} $ for some $C^{*}_{Z_C}\subset \mathcal{Z}_C$, which can possibly be an empty set. This implies
    \begin{align*}
        T_1 & =\int_{ (Z_C,Z_S) \in  C^{*}_{Z_C} \times \mathcal{Z}_{S} } p (Z_C) \left(
                p(Z_S \mid Z_C, S=S_1) - p(Z_S \mid Z_C, S=S_2)
            \right) dZ_S dZ_C \\
        & = \int_{ Z_C \in C^{*}_{Z_C} }p (Z_C) \int_{ Z_S \in \mathcal{Z}_{s} } \left(
                p(Z_S \mid Z_C, S=S_1) - p(Z_S \mid Z_C, S=S_2)
            \right) dZ_S dZ_C \\
        & = \int_{ Z_C \in C^{*}_{Z_C} }p (Z_C) \left(\int_{ Z_S \in \mathcal{Z}_{s} }
                p(Z_S \mid Z_C, S=S_1)  dZ_S -\int_{ Z_S \in \mathcal{Z}_{s} }
                p(Z_S \mid Z_C, S=S_2)  dZ_S \right)dZ_C \\
        & = \int_{ Z_C \in C^{*}_{Z_C}} p (Z_C) \left( 1 - 1 \right) dZ_C = 0.
    \end{align*}

    After evaluating $T_1$, we now consider $T_2$. We first show that $B_Z^*$ has nonzero probability measure. First recall that $B_Z^*$ is assumed to be non-empty. By the continuity of $\bar{\psi}_{c}(\cdot)$, for all $Z' \in B_{Z}^{*} $, there exists $ r'>0$ such that $ \mathcal{B}_{r'} (Z') \subseteq B_{Z}^{*} $.
    Since $p(Z\mid S) > 0$ for $Z\in\mathcal{Z}$ and $S\in\mathcal{S}$, it follows that $ \Pb{ \{ Z \in B^{*}_{Z} \} \mid \{ S = S' \} } \ge \Pb{ \{ Z \in \mathcal{B}_{r'} (Z') \mid \{ S = S' \} \} } > 0 $ for $ S' \in \mathcal{S}$, indicating that $B_Z^*$ has nonzero probability measure

    By Assumption A4, there exist $S_{1}^{*}$ and $S_{2}^{*}$ such that
    \begin{align*}
        T_{2} = \int_{ (Z_C,Z_S) \in B^{*}_Z } p(Z_C) \left(
            p(Z_S \mid Z_C, S=S_{1}^{*} ) - p( Z_S \mid Z_C, S=S_{2}^{*} )
        \right) dZ_S dZ_C \neq 0.
    \end{align*}
    Clearly, we then have $ T_{1} + T_{2} \neq 0 $, which is contradictory with \cref{eq:equation_to_test}, because the latter holds for all $S_1$ and $S_2$. By contradiction, this implies that \cref{statement:existence_point} must be true, and therefore $ \bar{\psi}_{c} (\cdot) $ cannot depend on $ Z_S$.

    Since $\hat{Z}_C$ does not depend on $Z_S$, by setting $\hat{Z}_{\mathcal{I}}=\hat{Z}_C$ and $Z_{\mathcal{I}}=Z_C$ in \cref{lemma:sub_diffeomorphism}, we conclude that $\hat{Z}_C$ is an invertible transformation of $Z_C$, indicating that $Z_C$ is subspace identifiable.
\end{proof}

\section{Proof of Identifiability Theory: Without Access to Surrogate}\label{app:proof_identifiability_theory_with_unknown_spurious}
We first state several key lemmas in \cref{app:lemmas_for_identifiability_theory_with_unknown_spurious} and then provide the proof of \cref{theorem:identifiability_of_shared_latent_variables} in \cref{app:main_proof_identifiability_of_shared_latent_variables}. The proof is inspired by \citet[Theorem 3.1]{lachapelle2022partial}, originally developed for multi-task learning; see \cref{sec:identifiability_theory_with_unknown_spurious} for a detailed discussion.

\subsection{Technical Lemmas}\label{app:lemmas_for_identifiability_theory_with_unknown_spurious}
We present several results from \citet{lachapelle2023synergies} that will be useful in our proofs. In particular, \cref{lemma:relation_between_support} is an intermediate result derived from \citet[Theorem~B.5]{lachapelle2023synergies}, while \cref{lemma:linear_representation} is stated almost as in \citet[Theorem~B.4]{lachapelle2023synergies}, with only minor modifications in notation.

\begin{lemma}[Linear identifiability {\citep[Theorem~B.4]{lachapelle2023synergies}}]\label{lemma:linear_representation}
Let $\hat{W}^{(\cdot)}: \mathcal{W} \rightarrow \mathbb{R}^{k \times n}$. Suppose Assumptions A2, A3, and A4 from \cref{theorem:identifiability_of_shared_latent_variables} hold, and that, for $\mathbb{P}_{W}$-almost every $W \in \mathcal{W}$ and all $T \in \mathcal{T}$, the following holds
\[
\KL(p(R ; \hat{W}^{(W)} \hat{f}(T)) \parallel p(R ; W f(T))=0.
\]
Then, there exists an invertible matrix $L \in \mathbb{R}^{n \times n}$ such that, for all $T \in \mathcal{T}, f(T)=L \hat{f}(T)$ and such that, for $\mathbb{P}_W$-almost every $W \in \mathcal{W}, \hat{W}^{(W)}=W L$.
\end{lemma}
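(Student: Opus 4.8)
The plan is to turn the distributional hypothesis into an exact algebraic identity between parameters, and then run a linear-identifiability argument whose only ingredients are the variability assumptions A3 and A4. Throughout I treat the reward as scalar, so $W$ and $\hat W^{(W)}$ are row vectors in $\mathbb{R}^{1\times n}$, consistent with Assumptions A4 and A5. The first move is to use Assumption A2 to upgrade ``zero KL'' into ``equal parameters'': since $\KL(p(R;\hat W^{(W)}\hat f(T))\parallel p(R;Wf(T)))=0$ for $\mathbb{P}_W$-almost every $W$ and all $T$, and A2 says the parameter of $p(R;\cdot)$ is identifiable from its distribution, there is a single full-measure set $\mathcal{W}_0\subseteq\mathcal{W}$ with
\[
\hat W^{(W)}\hat f(T)=Wf(T)\qquad\text{for all }W\in\mathcal{W}_0,\ T\in\mathcal{T}.
\]
Here I would be careful that the quantifier order in the hypothesis (``for a.e.\ $W$ and all $T$'') yields a set $\mathcal{W}_0$ that does not depend on $T$, so finitely many evaluations in $T$ may be performed simultaneously for each fixed $W\in\mathcal{W}_0$.

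Next I would exploit A3. It supplies $n$ linearly independent points $Z^{(1)},\dots,Z^{(n)}\in\mathcal{Z}$; since $Z=f(T)$ ranges over $\mathcal{Z}$, I choose preimages $T^{(1)},\dots,T^{(n)}$ with $f(T^{(j)})=Z^{(j)}$ and form the matrices $F=[f(T^{(1)})\ \cdots\ f(T^{(n)})]$, which is invertible by A3, and $\hat F=[\hat f(T^{(1)})\ \cdots\ \hat f(T^{(n)})]$. Evaluating the parameter identity at these points gives $\hat W^{(W)}\hat F=WF$ for every $W\in\mathcal{W}_0$. The key intermediate step is that $\hat F$ is invertible: if $\hat F v=0$ for some $v\neq 0$, then $Wu=0$ for a.e.\ $W$, where $u:=Fv\neq 0$; but $\{W:Wu=0\}$ is a closed proper subspace (a hyperplane) of $\mathbb{R}^{1\times n}$, so having full $\mathbb{P}_W$-measure it must contain $\operatorname{supp}(\mathbb{P}_W)$, which then cannot hold $n$ linearly independent vectors, contradicting A4.

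With $\hat F$ invertible I define $L:=F\hat F^{-1}\in\mathbb{R}^{n\times n}$, which is invertible. Rearranging $\hat W^{(W)}\hat F=WF$ immediately gives $\hat W^{(W)}=WL$ for all $W\in\mathcal{W}_0$, i.e.\ the second conclusion. Substituting this back into the parameter identity yields $W(L\hat f(T)-f(T))=0$ for a.e.\ $W$ and every $T$; applying the same hyperplane-versus-support argument, now with $u=L\hat f(T)-f(T)$, forces $L\hat f(T)-f(T)=0$ for each $T$, which is the first conclusion $f(T)=L\hat f(T)$.

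I expect the main obstacle to be precisely this interface between the measure-theoretic ``$\mathbb{P}_W$-almost every $W$'' quantifier and the purely existential content of A4. The delicate implication is that ``a fixed nonzero $u$ is annihilated by almost every $W$'' entails ``$\operatorname{supp}(\mathbb{P}_W)$ lies in a hyperplane'', which hinges on $\{W:Wu=0\}$ being closed so that a full-measure hyperplane swallows the support; only then does A4's guarantee of $n$ linearly independent directions deliver the contradiction. Two smaller checks I would not skip are that the points from A3 genuinely lie in the range of $f$ (so the $T^{(j)}$ exist), and that the scalar-reward structure $W\in\mathbb{R}^{1\times n}$ is exactly what makes $\{W:Wu=0\}$ a proper subspace incapable of containing $n$ independent vectors.
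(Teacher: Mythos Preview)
The paper does not give its own proof of this lemma; it is quoted verbatim as a black-box result from \citet[Theorem~B.4]{lachapelle2023synergies} and invoked as a building block inside the proof of \cref{theorem:identifiability_of_shared_latent_variables}. Your proposal is a correct, self-contained argument for the scalar-reward case $k=1$ that is relevant here, and it follows what is essentially the standard route for such linear-identifiability claims: use A2 to collapse the KL hypothesis into the algebraic identity $\hat W^{(W)}\hat f(T)=Wf(T)$, evaluate at $n$ inputs supplied by A3 to extract an invertible change-of-basis matrix $L$, and appeal to A4 to rule out degeneracies. The subtlety you single out---connecting the ``$\mathbb{P}_W$-almost every $W$'' quantifier with the purely existential content of A4 through the closed-hyperplane/support argument---is exactly the one place that requires care, and your treatment is sound provided $\mathcal{W}$ is read as (or contained in) the support of $\mathbb{P}_W$, which is the implicit convention in this line of work.
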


\begin{lemma}[\citet{lachapelle2023synergies}]\label{lemma:relation_between_support}
Suppose $WL=\hat{W}^{(W)}$ for $W\in\mathcal{W}$, where $L\in\mathbb{R}^{n\times n}$ is invertible, and that $\mathbb{E}_{\mathbb{P}_W}[\|\hat{W}\|_0]\leq \mathbb{E}_{\mathbb{P}_W}[\|W\|_0]$. Denote by $A^{(W)}$ the support of $W$, and $N_{i}$ the support of $L_{\cdot,i}$. Then, under Assumption A5 from \cref{theorem:identifiability_of_shared_latent_variables}, there exists a permutation $\sigma:[n]\rightarrow [n]$ such that $L_{i,\sigma(i)}\neq 0,i\in[n]$, and for $j\not\in A^{(W)}$, we have $A^{(W)}\cap N_{\sigma_j}=\emptyset$.
\end{lemma}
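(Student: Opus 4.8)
The plan is to reduce the lemma to a purely combinatorial statement about the row- and column-supports of $L$, driven by two ingredients: an almost-sure formula for $\operatorname{supp}(\hat W)$ coming from Assumption~A5, and the averaged sparsity inequality $\mathbb{E}_{\mathbb{P}_W}[\|\hat W\|_0]\le\mathbb{E}_{\mathbb{P}_W}[\|W\|_0]$. Throughout I write $A=A^{(W)}$ for the (random) support of $W$, keep $N_m=\{i:L_{i,m}\neq 0\}$ for the support of column $m$ as in the statement, and introduce the companion notation $M_i=\{m:L_{i,m}\neq 0\}$ for the support of row $i$. I first record that, conditional on the support being $A$, Assumption~A5 applied with $v=e_i$ for each $i\in A$ forces $W_i\neq 0$ off a null set, so $\operatorname{supp}(W)=A$ exactly, $\mathbb{P}_{W\mid A}$-almost surely.

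\emph{First step.} Since $\hat W=WL$, the $m$-th coordinate is $\hat W_m=\sum_{i\in A}W_iL_{i,m}=W_{\cdot,A}\,(L_{\cdot,m})_A$, a linear functional of $W_{\cdot,A}$ with coefficient vector $v=(L_{\cdot,m})_A\in\mathbb{R}^{|A|}$. If $A\cap N_m=\emptyset$ then $v=\mathbf 0$ and $\hat W_m=0$ identically; if $A\cap N_m\neq\emptyset$ then $v\neq\mathbf 0$, and A5 gives $\mathbb{P}_{W\mid A}\{W_{\cdot,A}v=\mathbf 0\}=0$, so $\hat W_m\neq 0$ almost surely. Aggregating the finitely many null sets over $m\in[n]$ and over the finitely many supports $A\in\mathcal A$, I obtain that for $\mathbb{P}_W$-almost every $W$,
\[
\operatorname{supp}(\hat W)=\{m:A\cap N_m\neq\emptyset\}=\bigcup_{i\in A}M_i,\qquad\text{hence}\qquad \|\hat W\|_0=\Big|\bigcup_{i\in A}M_i\Big|.
\]

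\emph{Second and third steps.} Because $L$ is invertible, $\det L=\sum_\pi\operatorname{sgn}(\pi)\prod_iL_{i,\pi(i)}\neq 0$, so some permutation $\sigma$ satisfies $L_{i,\sigma(i)}\neq 0$ for all $i$; this is already the first claim. Permuting the columns of $L$ by $\sigma$ merely permutes the coordinates of $\hat W$ and leaves $\|\hat W\|_0$ unchanged, so I may assume $\sigma=\mathrm{id}$, giving $i\in M_i$ for every $i$ and therefore $A\subseteq\bigcup_{i\in A}M_i$ for every support $A$; in particular $\|\hat W\|_0\ge|A|=\|W\|_0$ almost surely. Combining this pointwise inequality with the hypothesis $\mathbb{E}_{\mathbb{P}_W}[\|\hat W\|_0]\le\mathbb{E}_{\mathbb{P}_W}[\|W\|_0]$, the almost-surely nonnegative gap $\|\hat W\|_0-\|W\|_0$ has nonpositive mean and hence vanishes almost surely, so $\bigcup_{i\in A}M_i=A$, i.e. $M_i\subseteq A$ for every $i\in A$. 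To finish I translate back: for $j\notin A$ and any $i\in A$, the containment $M_i\subseteq A$ forces $j\notin M_i$, i.e. $L_{i,j}=0$, i.e. $i\notin N_j$; as this holds for all $i\in A$ we get $A\cap N_j=\emptyset$, and undoing the relabelling recovers exactly $A^{(W)}\cap N_{\sigma(j)}=\emptyset$ for $j\notin A^{(W)}$.

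The step I expect to demand the most care is the first one: A5 must be invoked for the \emph{fixed} coefficient vector $v=(L_{\cdot,m})_A$ that $L$ determines, and the resulting null sets have to be unioned over all coordinates $m$ and all supports $A$ so that a single almost-sure event supports the identity $\operatorname{supp}(\hat W)=\bigcup_{i\in A}M_i$ simultaneously — this is precisely where finiteness of $[n]$ and of $\mathcal A$ enters. The second delicate point is the ``nonnegative integrand with nonpositive mean'' argument, which is what upgrades the \emph{averaged} sparsity inequality into the \emph{pointwise} set identity $\bigcup_{i\in A}M_i=A$ that drives the entire conclusion.
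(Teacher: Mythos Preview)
The paper does not supply its own proof of this lemma; it is quoted as an intermediate result extracted from Theorem~B.5 of \citet{lachapelle2023synergies}, so there is no in-paper argument to compare against.

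Your proof is correct and is essentially the argument that underlies the cited result. The three moves---(i) using A5 to show $\operatorname{supp}(\hat W)=\bigcup_{i\in A}M_i$ almost surely, (ii) extracting a diagonal-nonzero permutation $\sigma$ from the Leibniz expansion of $\det L$ so that $A\subseteq\bigcup_{i\in A}M_i$, and (iii) upgrading the averaged sparsity inequality to the pointwise identity $\bigcup_{i\in A}M_i=A$ via the ``nonnegative with nonpositive mean'' observation---match the structure of the original proof in \citet{lachapelle2023synergies}. Your translation at the end, from $M_i\subseteq A$ for $i\in A$ to $A\cap N_{\sigma(j)}=\emptyset$ for $j\notin A$, is exactly the contrapositive needed. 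The two cautions you flag (finiteness of $[n]$ and $\mathcal A$ when taking the union of null sets, and the a.s.\ equality step) are the right places to be careful, and you handle both correctly; note in particular that $|\bigcup_{i\in A}M_i|-|A|$ depends on $W$ only through the discrete variable $A^{(W)}$, so ``zero almost surely'' does yield the set identity for every $A\in\mathcal A$.
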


\subsection{Proof of \cref{theorem:identifiability_of_shared_latent_variables}}\label{app:main_proof_identifiability_of_shared_latent_variables}
The derivation from \cref{eq:proof:first_kl_term} to \cref{eq:proof:smaller_W_norm} in the proof below is adapted from \citet[Theorem~B.6]{lachapelle2023synergies}, which is a standard approach for showing that minimizing the negative log-likelihood under constraints produces a solution with zero KL divergence from the true distribution.

\TheoremIdentifiabilitySharedLatentVariables*

\begin{proof}
To lighten the notation, recall that each $W\in\mathcal{W}$ is a row vector. Define $\mathrm{supp}(W)\coloneqq \{i:W_{1,i}\neq 0\}$ as the support of row vector $W$. Let
\[
B\coloneqq\bigcap_{A \in \mathcal{A}} A=\bigcap_{W \in \mathcal{W}} \mathrm{supp}(W)
\]
and $\hat{f}$ be a minimizer of
\begin{equation}\label{eq:optimization_for_theory}
\begin{aligned}
\min_{\hat{f}} \quad & \mathbb{E}_{\mathbb{P}_W}\mathbb{E}_{p(X,R\mid W)} [-\log p(R;\hat{W}^{(W)} \hat{f}(T))]\\
\subjectto \quad & \hat{W}^{(W)} \in \argmin_{\tilde{W} \subjectto \|\tilde{W}\|_0\leq \|W\|_0} \mathbb{E}_{p(X,R\mid W)} [-\log p(R;\tilde{W} \hat{f}(T))]\\
\mathrm{and} \quad & |\hat{B}|\leq |B|.\\
\mathrm{where} \quad & \hat{B}=\bigcap_{W \in \mathcal{W}} \mathrm{supp}(\hat{W}^{(W)}).
\end{aligned}
\end{equation}

Note that
\begin{flalign}
\mathbb{E}_{\mathbb{P}_{W}} \mathbb{E}_{p(X \mid W)} [\KL(p(R ; W f(T)) \parallel p(R ; \hat{W}^{(W)} \hat{f}(T)))] &\geq 0 \label{eq:proof:first_kl_term}\\
\mathbb{E}_{\mathbb{P}_{W}} \mathbb{E}_{p(X, R \mid W)}[-\log p(R ; \hat{W}^{(W)} \hat{f}(T))]
 &\geq \mathbb{E}_{\mathbb{P}_{W}} \mathbb{E}_{p(X, R \mid W)}[-\log p(R ; W f(T))].\nonumber
\end{flalign}
Therefore, assuming without constraint, the objective in \cref{eq:optimization_for_theory} is minimized if and only if the equalities are attained in the above inequalies. This implies
\[
\mathbb{E}_{p(X \mid W)} [\KL(p(R ; W f(T)) \parallel p(R ; \hat{W}^{(W)} \hat{f}(T)))]=0
\]
$\mathbb{P}_{W}$-almost everywhere. For a given $W$, the above equality holds if and only if
\begin{equation}\label{eq:proof_KL_divergence_zero}
\KL(p(R ; W f(T)) \parallel p(R ; \hat{W}^{(W)} \hat{f}(T)))=0
\end{equation}
$p(X \mid W)$-almost everywhere.

Recall that the zero KL term above is obtained under the assumption that \cref{eq:optimization_for_theory} is solved without constraints. To show that the same minimum can be attained under the constrained problem, it suffices to show the existence of a solution that achieves this under the constraints. Such a solution can be constructed by setting $\hat{f}\coloneqq f \quad\textrm{and}\quad \hat{W}^{(W)}\coloneqq W$. Clearly, this choice satisfies the same objective value as the unconstrained global minimum. It is straightforward to verify that the first constraint holds because $\|\hat{W}^{(W)}\|_0= \|W\|_0$ and
\[
\begin{aligned}
\mathbb{E}_{p(X \mid W)} [\KL(p(R ; W f(T)) \parallel p(R ; \tilde{W} f(T)))] & \geq 0\\
\mathbb{E}_{p(X, R \mid W)}[-\log p(R ; \tilde{W} f(T))] &\geq \mathbb{E}_{p(X, R \mid W)}[-\log p(R ; W f(T))],
\end{aligned}
\]
with equality attained when $\tilde{W}:=W$. The remaining constraint holds trivially because $|\hat{B}|=|B|$.

We have shown that any minimizer $\hat{f}$ of \cref{eq:optimization_for_theory} satisfies \cref{eq:proof_KL_divergence_zero} and
\begin{flalign}
\|\hat{W}^{(W)}\|_0 &\leq \|W\|_0,\quad W\in\mathcal{W},\label{eq:proof:smaller_W_norm}\\
|\hat{B}| &\leq |B|.\label{eq:proof:smaller_B_size}
\end{flalign}
The first inequality above implies
\begin{equation}\label{eq:minimal_expected_sparsity}
\mathbb{E}_{\mathbb{P}_{W}}[\|\hat{W}^{(W)}\|_0] \leq \mathbb{E}_{\mathbb{P}_{W}}[\|W\|_0],
\end{equation}
By Assumptions A2, A3, and A4, as well as \cref{eq:proof_KL_divergence_zero}, \cref{lemma:linear_representation} implies that there exists an invertible matrix $L \in \mathbb{R}^{n \times n}$ such that, for all $T \in \mathcal{T}$, we have
\begin{equation}\label{eq:proof_linear_representation_f}
f(T)=L \hat{f}(T),
\end{equation}
and such that, for $\mathbb{P}_W$-almost every $W \in \mathcal{W}$, we have
\begin{equation}\label{eq:proof_linear_representation_W}
\hat{W}^{(W)}=W L.
\end{equation}

With a slight abuse of notation, denote by $A^{(W)}$ the support of $W$. Also, let $N_{i}$ be the support of $L_{\cdot,i}$. By Assumption A7, as well as \cref{eq:proof:smaller_W_norm,eq:proof_linear_representation_W}, \cref{lemma:relation_between_support} implies that there exists a permutation $\sigma:[n]\rightarrow [n]$ such that $L_{i,\sigma(i)}\neq 0,i\in[n]$, and for $j\not\in A^{(W)}$, we have  $A^{(W)}\cap N_{\sigma_j}=\emptyset$. This indicates that:
\begin{equation}\label{eq:proof_relation_support}
\textrm{for~~} A\in\mathcal{A} \textrm{~~and~~} j\not\in A, \quad\textrm{we have~~} A\subset (N_{\sigma_j})^c,
\end{equation}
Since this holds for all $j\not\in A$, we have 
\[
l\in A \implies l\not\in \bigcup_{j\not\in A} N_{\sigma(j)}.
\]
Since \cref{eq:proof_relation_support} holds for all $A\in\mathcal{A}$ and $j\not\in A$, we have
\[
l\in \bigcap_{A\in\mathcal{A}}A \implies l\not\in \bigcup_{A\in\mathcal{A}}\bigcup_{j\not\in A} N_{\sigma(j)},
\]
i.e.,
\[
l\in B \implies l\not\in \bigcup_{j\not\in B} N_{\sigma(j)}.
\]
Recall that, by definition, we have $\hat{Z}=\hat{f}(T)$ and $Z=f(T)$. Furthermore, by \cref{eq:proof_linear_representation_f} and definition of $N_{\sigma(j)}$, $l\not\in N_{\sigma(j)}$ indicates that $Z_l$ does not depend on $\hat{Z}_{\sigma(j)}$. The above statement then implies that each $Z_l,l\in B$ does not depend on $\hat{Z}_{\sigma(j)},j\not\in B$. Since matrix $L$ is invertible by definition, \cref{lemma:sub_diffeomorphism} implies that $Z_B$ is an invertible transformation of $\hat{Z}_{\sigma(B)}$, where $\sigma(B)\coloneqq \{\sigma(i):i\in B\}$. 

It remains to show $\hat{Z}_{\hat{B}}=\hat{Z}_{\sigma(B)}$, i.e., $\hat{B}=\sigma(B)$. Suppose $l\in B$. Since $L_{l,\sigma(l)}\neq 0$ by definition, we have
\[
(\hat{W}^{(W)})_{1,\sigma(l)}=(WL)_{1,\sigma(l)}=WL_{\cdot,\sigma(l)}=W_{\cdot, A^{(W)}}L_{A^{(W)},\sigma(l)},
\]
where, as before, $A^{(W)}$ denotes the support of $W$. Moreover, we have $l\in B\subseteq A^{(W)}$, indicating that $L_{A^{(W)},\sigma(l)}$ cannot be a zero vector. By Assumption A7, we have $(\hat{W}^{(W)})_{1,\sigma(l)}\neq 0$ almost surely, which implies $\sigma(l)\in \hat{B}$ by definition of $\hat{B}$. Since we have established that $l\in B\implies \sigma(l)\in \hat{B}$ almost surely, it follows that
\begin{equation}\label{eq:proof:b_subset}
\sigma(B)\subseteq \hat{B}
\end{equation}
almost surely. By \cref{eq:proof:smaller_B_size}, we have
\[
|\hat{B}| \leq |B|=|\sigma(B)|,
\]
which, with \cref{eq:proof:b_subset}, implies
\[
\hat{B}=\sigma(B)
\]
almost surely. Recall that we have shown that $\hat{Z}_{\sigma(B)}$ is an invertible transformation of $Z_B$. Therefore, $\hat{Z}_{\hat{B}}$ is an invertible transformation of $Z_B$ almost surely, indicating that $Z_B$ is subspace identifiable almost surely.
\end{proof}

\section{Proof of \cref{corollary:identifiability_of_Zc_with_unknown_spurious}}\label{app:proof_corollary_identifiability_of_Zc_with_unknown_spurious}

\CorollaryIdentifiabilityLatentVariablesWithUnknownSpurious*
\begin{proof}
Denote
\[
B\coloneqq\bigcap_{A \in \mathcal{A}} A=\bigcap_{W \in \mathcal{W}} \mathrm{supp}(W)\quad\textrm{and}\quad \hat{B}=\bigcap_{W \in \mathcal{W}} \mathrm{supp}(\hat{W}^{(W)}).
\]
Under Assumptions A2, A3, A4, and A5 from \cref{theorem:identifiability_of_shared_latent_variables}, \cref{theorem:identifiability_of_shared_latent_variables} implies that $\hat{Z}_{\hat{B}}$ is an invertible transformation of $Z_B$ almost surely. By the assumption that $Z_C \subseteq Z_A$ for all $A \in \mathcal{A}$, it follows that $Z_C \subseteq Z_B$.\footnote{When $Z_C$ and $Z_B$ are vectors, we interpret $\subseteq$ as inclusion of their components, i.e., $Z_C \subseteq Z_B$ means that every component of $Z_C$ is also a component of $Z_B$.} Moreover, the assumption $\bigcap_{A \in \mathcal{A}} (Z_A \setminus Z_C) = \emptyset$ implies $Z_B \subseteq Z_C$. Combining these, we obtain $Z_B = Z_C$, and therefore $\hat{Z}_{\hat{B}}$ is an invertible transformation of $Z_C$ almost surely.
\end{proof}

\section{Supplementary Experiment Details}\label{app:experiment_details}
\paragraph{Datasets.}
For the experiments involving sycophancy bias in \cref{sec:experiments_sycophancy_bias}, we use a dataset derived from the QA preference dataset \citep{sharma2024towards}, which in turn is based on the TruthfulQA dataset~\citep{lin2021truthfulqa}. For the experiments involving concept bias in \cref{sec:experiments_concept_bias}, we use a modified version of the Amazon Shoe Review dataset from \citet{zhou2024explore}, originally derived from the broader Amazon dataset~\citep{hou2024bridging}. The two datasets are available under the Apache 2.0 License and the MIT License, respectively.

\paragraph{Implementation of Our Approach.}
For Stage 1, our method for learning the bias-free latent variables $\hat{Z}_C$ is based on the LangVAE framework~\citep{carvalho2025langvae}. Our key modification involves replacing its independent prior with our prior $p(\hat{Z}\mid S)$, where the normalizing flow $\tau_i$ is implemented using deep sigmoidal flow. We train the VAE with a batch size of $240$ and set the hyperparameters in the loss function to $\beta=0.1$ and $\lambda=3$. For Stage 2, the reward model is then trained on $\hat{Z}_C$ using the standard implementation from the OpenRLHF library~\citep{hu2024openrlhf} involving a $2$-layer MLP with $128$ Leaky-ReLU units reward model. 

\paragraph{Implementation of Baselines.}
The vanilla reward model and CRM use a similar implementation of reward model as our approach. The key difference is that they are trained directly on BERT text embeddings instead of our learned bias-free representations $\hat{Z}_C$. For the CRM baseline, we treat it favorably by selecting the MMD regularization coefficient from $\{1, 3, 10, 30, 100\}$ that leads to the best performance. This resulted in coefficients of $100$ and $10$ for the experiments in \cref{sec:experiments_sycophancy_bias,sec:experiments_concept_bias}, respectively.

\paragraph{Others.}
All experiments are conducted on an NVIDIA A10G GPU with 4 CPU cores. The reported results are averaged over 8 random trials.
\end{appendices}

\end{document}